\documentclass{article} %
\usepackage{preprint,times}
\preprint

\usepackage[T1]{fontenc}
\usepackage[utf8]{inputenc}
\usepackage{microtype}
\usepackage{amsmath,amssymb,amsthm,mathtools,bm}
\usepackage{booktabs,array}
\usepackage{enumitem}
\usepackage{minitoc}
\usepackage{todonotes}
\usepackage{xcolor}
\usepackage[table]{xcolor}
\usepackage[most]{tcolorbox}
\usepackage{tikz}
\usetikzlibrary{arrows.meta,positioning,calc,fit}
\usepackage{hyperref}

\usepackage[nameinlink,capitalise,noabbrev]{cleveref}
\usepackage{notation}
\usepackage{wrapfig}
\usepackage{graphicx}
\usepackage{subcaption}
\usepackage{tabularx}

\usepackage{authblk}

\renewcommand{\url}[1]{%
  \href{#1}{\texttt{\detokenize{#1}}}%
}

\definecolor{pcblue}{RGB}{42,92,138}
\definecolor{pcgreen}{RGB}{46,118,93}
\definecolor{pcred}{RGB}{153,66,66}
\definecolor{pcgold}{RGB}{162,117,27}
\definecolor{pcgray}{RGB}{90,96,104}
\definecolor{pcpurple}{RGB}{105,73,148}
\definecolor{pclight}{RGB}{246,248,251}

\hypersetup{
  colorlinks=true,
  linkcolor=pcblue,
  citecolor=pcgreen,
  urlcolor=pcblue
}

\tcbset{
  subtlebox/.style={
    enhanced,
    breakable,
    frame hidden,
    colback=white,
    coltitle=black,
    colbacktitle=white,
    fonttitle=\bfseries,
    left=3mm,
    right=0mm,
    top=1mm,
    bottom=1mm,
    boxsep=0pt,
  }
}

\newtcolorbox{takeawaybox}[1][]{
  subtlebox,
  borderline west={3pt}{0pt}{pcgreen!72},
  title={Main takeaway},
  #1
}

\newtcolorbox{intuitionbox}[1][]{
  subtlebox,
  borderline west={3pt}{0pt}{pcblue!68},
  title={Intuition},
  #1
}

\newtcolorbox{interpretationbox}[1][]{
  subtlebox,
  borderline west={3pt}{0pt}{pcgold!72},
  title={Interpretation},
  #1
}

\newtcolorbox{warningbox}[1][]{
  subtlebox,
  borderline west={3pt}{0pt}{pcred!68},
  title={Important limitation},
  #1
}

\newtcolorbox{notationbox}[1][]{
  subtlebox,
  borderline west={3pt}{0pt}{pcgray!55},
  title={Notation},
  #1
}

\newtcolorbox{clarificationbox}[1][]{
  subtlebox,
  borderline west={3pt}{0pt}{pcpurple!68},
  title={Key clarification},
  #1
}

\title{Counterfactual Predictions in Scientific \\ Emulators Without Controlled Experiments}
\author{
\textbf{Dingling Yao}$^{1}$,
\textbf{Kahaan Gandhi}$^{2}$,
\textbf{Valentin Duruisseaux}$^{3}$,
\textbf{Boris Bonev}$^{3}$
\vspace{-1mm}

\textbf{Francesco Locatello}$^{1,\dagger}$,
\textbf{Anima Anandkumar}$^{2,\dagger}$
\\
\vspace{0em}
\normalfont
$^{1}$Institute of Science and Technology Austria\quad
$^{2}$California Institute of Technology\quad
$^{3}$NVIDIA\quad
\vspace{0.25em}
\small $^\dagger$Equal supervision.
}

\begin{document}
\doparttoc %
\faketableofcontents %
\maketitle

 \vspace{-3mm}
 
\begin{abstract}
\looseness=-1
Many scientific questions require reasoning about what was never observed:
\textit{What if the conditions, interventions, or history had been different?}
Models can predict accurately on observed data yet fail on such what-if queries when correlated inputs are varied independently.
A common remedy is to add controlled simulation data in which these factors are explicitly disentangled, but this requires access to a simulator, can be computationally expensive, and inherits the simulator's modeling assumptions.
We introduce \emph{ReRoute}, a framework for targeted scientific what-if prediction that combines factual data with partial mechanistic knowledge, without requiring controlled intervention data for adaptation.
\emph{ReRoute} fixes the queried input of a pretrained backbone to a reference value, reintroduces its variation through a known mechanistic pathway, and fine-tunes on the original factual data, while leaving downstream effects to the learned dynamics.
We provide a causal identification result for this construction under explicit structural assumptions, with the core argument machine-checked in Lean.
After showing that \emph{ReRoute} achieves highly accurate counterfactual predictions in a controlled advection--diffusion system where exact responses are available, we turn to state-of-the-art climate emulation.
On held-out coupled-climate interventions, \emph{ReRoute} reduces aggregate climate error by 18.2--31.8\% under severe CO$_2$ distribution shifts while preserving skill under standard conditions, at a small fraction of the cost of retraining on additional controlled simulations, without even accounting for the substantial expense of generating such data. Finally, on an emulator trained from historical ERA5 reanalysis, where no counterfactual reference exists, \emph{ReRoute} preserves substantially more of the surface warming implied by the observed boundary conditions under a fixed-CO$_2$ counterfactual.
\end{abstract}

\section{Introduction}

\looseness=-1 Data-driven surrogate models, or emulators, are becoming a standard tool for approximating complex scientific systems. Given the current state of a system together with external conditions, they predict its target at a fraction of the cost of traditional simulation. But scientific questions often requires reasoning beyond predictions: We want to know how a system would respond under a change we have not directly observed: how would the atmosphere respond if CO$_2$ level had never risen \citep{watt2025ace2,clark2026disentangling}, how would a biological system respond to a different treatment or dose \citep{feuerriegel2024causal}, or how would a physical system evolve under a modified forcing \citep{rigas2017weakly}? Answering such \emph{what-if questions} requires models to understand the mechanism of the underlying system, which is one step further than merely matching the observational distribution.

A central challenge arises when the training data contain only correlated rather than independently varied \emph{drivers} (i.e. variables whose values influence the outcome of interest, such as environmental conditions, treatments, or physical forcings). 
A flexible model can use one input as a proxy for another and still achieve excellent predictive accuracy, while learning the wrong underlying mechanism. Standard validation does not expose this shortcut when the same dependencies persist at test time; it becomes visible only when a what-if query breaks them. A direct solution is to augment training with controlled experiments in which the relevant drivers are varied independently. Such experiments provide exactly the missing variation, but require access to a simulator, can be costly to generate at scale, and inherit the simulator's own modeling assumptions and biases. For many real scientific systems, the corresponding controlled perturbation data are simply unavailable.

This challenge is particularly relevant in weather and climate science, where data-driven systems are increasingly prominent, as evidenced by recent advances in weather forecasting such as FourCastNet~\citep{pathak2022fourcastnet,bonev2025fourcastnet} and GenCast~\citep{price2025probabilistic}, and in climate emulation through models including ACE \citep{watt2023ace}, probabilistic extensions based on spherical DYffusion \citep{ruhling2024probablistic}, and the ACE2 series~\citep{watt2025ace2,clark2025ace2}.
Historical climate trajectories illustrate the challenge of co-varying drivers: atmospheric CO$_2$ rises as sea surface temperature (SST) warms and sea ice declines. An emulator trained on these trajectories can therefore reproduce historical warming while relying too strongly on CO$_2$ as a proxy for the atmospheric response \citep{watt2025ace2,zhang2025equilibrium,clark2026disentangling}. If CO$_2$ is held fixed while SST and sea ice are varied, this learned dependence can yield unrealistic responses. The model is thus not necessarily extrapolating beyond the range of any single input, but rather to an \emph{unseen combination of inputs}. The controlled-experiment approach provides a direct remedy in this setting: additional physics-based simulations can be generated in which SST and CO$_2$ are varied more independently. Recent climate work uses precisely this strategy to decorrelate these drivers during training, substantially improving the separation of their effects \citep{clark2026disentangling}.

Scientific models, however, often come with partial mechanistic knowledge. We may know how a targeted driver first enters the system, for example, through a radiative, chemical, mechanical, or physiological pathway, but without knowing enough to model all of its downstream interactions. This suggests a different source of information for answering what-if questions:

\vspace{-0.2mm}
\begin{quote}
\emph{Can partial mechanistic knowledge enable reliable what-if predictions when controlled training experiments are unavailable?}
\end{quote}
\vspace{-0.2mm}

\looseness=-1 \textbf{Proposed approach.} We introduce \emph{ReRoute}, a framework that answers targeted what-if questions by combining observational data with known mechanistic pathways, without requiring controlled or interventional training data.
Let $\sb$ denote the input variable targeted by the what-if query. \emph{ReRoute} fixes the value seen by the pretrained backbone to a reference value $\sb_0$ and reintroduces its factual value through a supplied mechanistic pathway $\Gamma(\sb_0,\sb)$, which describes the first level response of intervening the system from $\sb_0$ to $\sb$ while leaving remaining responses to be learned from data.

We demonstrate that \emph{ReRoute} admits a causal identification guarantee: under explicit structural assumptions, the target interventional response is identifiable from observational trajectories together with the known mechanism. The formal conditions and Lean-verified proofs are provided in \cref{app:proofs}.

\begin{clarificationbox}[title={}]
\looseness=-1 \textbf{When does \emph{ReRoute} apply?} \emph{ReRoute} is a general scientific-ML framework.
It applies when three ingredients are available: (1) a pretrained emulator, (2) a targeted input whose variation is entangled with other inputs in the training data, and (3) partial knowledge coming from physics, chemistry, biology, or other domain sciences which only needs to characterize how this targeted input enters the system, not the full chain of subsequent effects. \emph{ReRoute} uses this pathway to route the targeted perturbation while leaving the remaining dynamics learned from data. 
\end{clarificationbox}

This setting arises across scientific domains when a pretrained emulator takes several variables as inputs, although they were not varied independently in its training data. In historical climate data such as ERA5 \citep{hersbach2020era5}, CO$_2$ forcing and SST co-evolve, so an emulator may entangle their effects. Radiative physics specifies how a CO$_2$ intervention modifies the atmospheric forcing while preserving the SST evolution, leaving the emulator to predict the downstream atmospheric response. In chemical reaction datasets, catalyst concentration is often set as a fixed fraction of reactant concentration, so experiments with more reactant also contain more catalyst. Chemistry specifies whether an increase should preserve the catalyst concentration or the catalyst-to-reactant ratio, while the emulator predicts reaction rate, yield, and selectivity \citep{clayden2012organic,atkins2022physical}. Exercise datasets often associate greater intensity with shorter duration because intense workloads cannot be sustained for as long. Physiology specifies whether an increase in intensity should preserve duration, total work, or energy expenditure, while the emulator predicts cardiovascular and metabolic responses \citep{kenney2025physiology}. Nutrition datasets may associate higher caloric intake with greater protein intake because increasing food intake often changes both. Nutritional knowledge specifies the macronutrient composition of the additional calories, allowing the emulator to predict effects on metabolism and body composition \citep{gropper2022advanced}. In all these examples, domain knowledge need only translate the targeted intervention into a scientifically coherent change to the model inputs, leaving the pretrained emulator to predict the downstream response.

\begin{figure} \vspace{-1.4mm}
  \centering  \includegraphics[width=\linewidth]{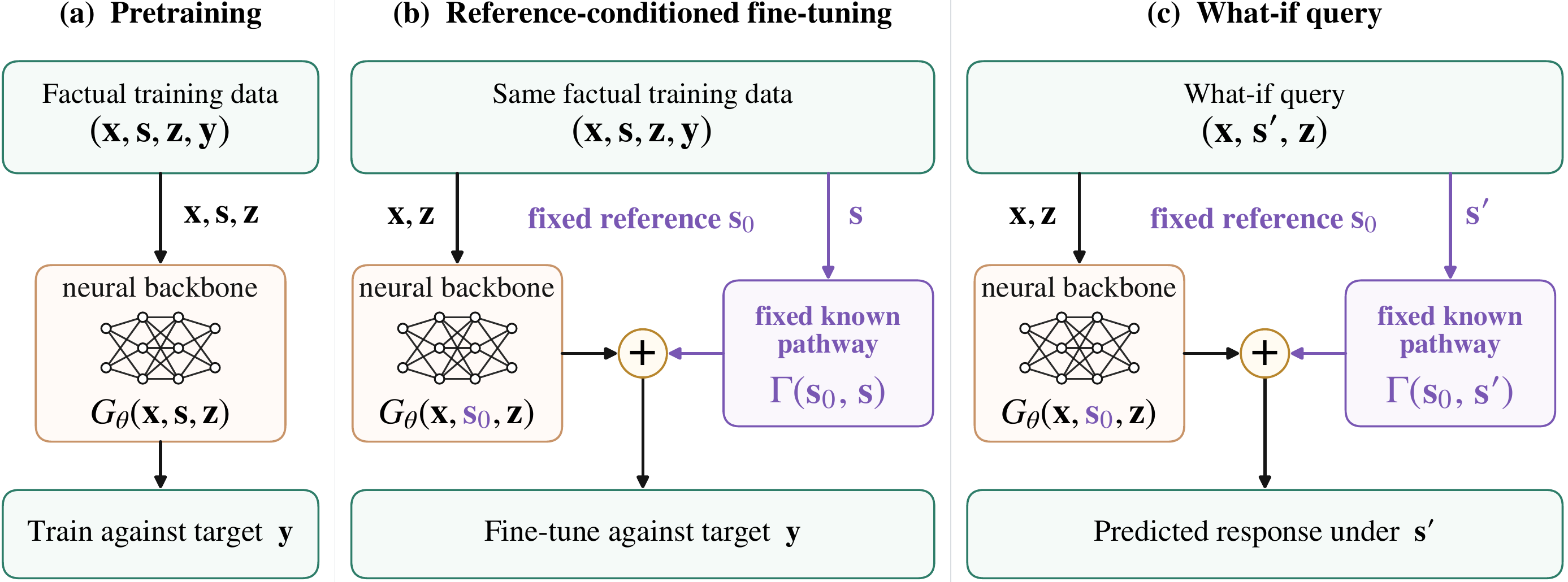}

\caption{Overview of the \emph{ReRoute} framework. \textbf{(a)} Standard training: the emulator $G_{\theta}$ is trained on factual samples $(\mathbf{x},\mathbf{s},\mathbf{z},\mathbf{y})$. \textbf{(b)} \emph{ReRoute} initializes from the pretrained operator $G_{\theta}$, replaces the targeted input $\mathbf{s}$ seen by the backbone with the fixed reference $\mathbf{s}_0$, and reintroduces the factual value $\mathbf{s}$ only through the fixed known pathway $\Gamma(\mathbf{s}_0,\mathbf{s})$. The model is fine-tuned using the same factual data $(\mathbf{x},\mathbf{s},\mathbf{z},\mathbf{y})$. \textbf{(c)} For a what-if query $\mathbf{s}'$, the backbone remains conditioned on $\mathbf{s}_0$, while $\mathbf{s}'$ enters through the pathway $\Gamma(\mathbf{s}_0,\mathbf{s}')$, producing the predicted response under the requested value.}
\label{fig:overview} \vspace{-3mm}
\end{figure}

\looseness=-1 \textbf{Summary of Results.}
\emph{ReRoute} consistently improves counterfactual prediction across controlled and climate CO$_2$ experiments. In advection--diffusion with exact counterfactual responses, it reduces response error by up to $89.6\%$, including over rollouts $10\times$ longer than the fine-tuning horizon. Applied to pretrained climate emulators, it reduces aggregate error by $18.2$--$31.8\%$ under severe SST--CO$_2$ shifts without controlled-CO$_2$ supervision, while maintaining comparable skill under standard and mildly shifted conditions. Finally, in an emulator trained directly on historical ERA5 reanalysis~\citep{hersbach2020era5}, \emph{ReRoute} preserves substantially more near-surface warming when CO$_2$ is held fixed while observed SST and sea-ice evolution are retained, and yields a tropical-cyclone climatology closer to ERA5. These gains require only light fine-tuning and no additional interventional data, at a fraction of the cost of retraining on controlled simulations.

\section{Related Work}
\label{app:related-work}

\looseness=-1
\textbf{Scientific generalization beyond observed regimes.}
A central challenge in scientific machine learning is generalization to conditions that are absent or weakly represented in the training distribution, including unseen interventions, forcing regimes, and system configurations. Existing approaches seek such generalization by learning representations or structural invariances that remain stable across environments~\citep{scholkopf2021toward,yao2025unifying,yu2025causal}, or by relying on sufficient observational support for the target regime~\citep{imbens2015causal,hernan2020causal}. In scientific applications, however, the most interesting queries often deliberately move beyond this support, while collecting new controlled experiments or simulations may be expensive or impossible. \emph{ReRoute} addresses this setting through targeted mechanistic extrapolation: rather than learning a general-purpose representation that must transfer to arbitrary distribution shifts, it uses partial scientific knowledge about how a specific queried variable enters the system and leaves the remaining response to the pretrained model. This replaces broad assumptions about transfer across arbitrary shifts with a narrower, query-specific assumption about how the targeted variable enters the system.

\looseness=-1
\textbf{Hybrid scientific modeling.}
Hybrid scientific models incorporate physical structure through physics-informed objectives~\citep{raissi2019physics,karniadakis2021physics,li2024physics}, dynamical or spectral regularization~\citep{mccabe2023towards,chattopadhyay2024challengeslearningmultiscaledynamics}, hard constraints such as conservation laws~\citep{beucler2019,beucler2021enforcing,Harder2024}, and formulations that combine learned components with known mechanistic dynamics or closures~\citep{pan2018data,rackauckas2020universal}. These approaches typically encode governing equations, conservation relations, or substantial dynamical structure of the system. \emph{ReRoute} uses a more local form of scientific knowledge: it requires only the pathway through which the queried variable first enters the system, while complex downstream interactions and feedbacks remain learned by the pretrained emulator. This makes \emph{ReRoute} a lightweight adaptation strategy for targeted what-if prediction without requiring a complete mechanistic model.

\looseness=-1
\textbf{Climate emulators and forcing disentanglement.}
Our experiments use the ACE2 family of machine-learned climate emulators as a concrete instance of this problem. ACE2 is conditioned on prescribed SST, sea ice, and atmospheric CO$_2$, whose strong co-variation in historical training data can lead to unrealistic responses when these forcings are varied independently~\citep{watt2025ace2}. ACE2-SOM extends the training distribution to substantially different equilibrium CO$_2$ climates using controlled simulations~\citep{clark2025ace2}, while \citet{clark2026disentangling} directly address SST--CO$_2$ entanglement by generating random-CO$_2$ simulations in which the two forcings vary independently. \citet{mahesh2026examining} further show that historically trained weather emulators can reproduce fast radiatively driven responses to unprecedented CO$_2$ perturbations when the relevant response is already represented in historical variability. \emph{ReRoute} targets the complementary setting in which direct manipulation of the learned forcing input is insufficient and additional controlled simulations are unavailable: it supplies the known radiative entry pathway for CO$_2$ while retaining the emulator's learned atmospheric and coupled dynamics, enabling targeted forcing interventions using only the original training distribution.

\section{\emph{ReRoute}: Mechanism-Guided Adaptation}
\label{sec:method}

\looseness=-1 This section formalizes \emph{ReRoute}. We first define the targeted what-if query, then introduce the known pathway and the reference-conditioned fine-tuning procedure. The key idea is simple: \emph{remove the queried driver from the flexible backbone and reintroduce it through a deterministic pathway whose form is known}. The remaining system response is still learned from data.

\begin{takeawaybox}[title={}]
\looseness=-1 \textbf{\emph{ReRoute} in one sentence}: Keep the input variable that we targeted to intervene fixed inside the pretrained backbone, route its variation through the known mechanism, and fine-tune the complete model on the original data.
\end{takeawaybox}

\subsection{Problem Setup: Targeted What-If Queries}

Let $\mathbf{x}\in\mathcal{X}$ denote the current system state. We partition the externally specified inputs as
\[
    \mathbf{u}=(\mathbf{s},\mathbf{z}),
    \qquad
    \mathbf{s}\in\mathcal{S},
    \quad
    \mathbf{z}\in\mathcal{Z},
\]
where $\mathbf{s}$ is the \emph{targeted input variable} that we want to vary in a what-if query, and $\mathbf{z}$ contains the remaining conditions. The system response is
\begin{equation}
    \mathbf{y}
    =
    f^\ast(\mathbf{x},\mathbf{s},\mathbf{z}),
    \qquad
    f^\ast:
    \mathcal{X}\times\mathcal{S}\times\mathcal{Z}
    \rightarrow
    \mathbb{R}^d.
    \label{eq:true-response}
\end{equation}

Training samples come from the distribution available for model training $(\mathbf{X},\mathbf{S},\mathbf{Z},\mathbf{Y})
    \sim P_{\mathrm{train}}$, where $\mathbf{Y}
    =
    f^\ast(\mathbf{X},\mathbf{S},\mathbf{Z})$. Our goal is to answer a targeted query
$f^\ast(\mathbf{x},\mathbf{s}',\mathbf{z})$,
where $\mathbf{s}' \in \Scal$ is a requested value of the targeted driver.     Throughout the paper, we use \textit{counterfactual prediction} to mean predicting the evolution of a realized system under an alternative forcing trajectory while holding the remaining observed conditions fixed. Our causal analysis formalizes the corresponding intervention on the targeted driver at the distributional level.

\begin{intuitionbox}[title={}]
 \textbf{Climate example}: \emph{What if CO$_2$ had stayed at its 1940 level while other boundary conditions evolved as they did historically?} Here, $\mathbf{s}$ is atmospheric CO$_2$, $\mathbf{z}$ contains the remaining control inputs such as SST and sea ice, and $f^\ast$ is the unknown full climate response. Given only the historical trajectory, our goal is to learn an emulator that can answer this unseen fixed-CO$_2$ query.
\end{intuitionbox}

\subsection{A Known Pathway for the Targeted Driver}

Choose a reference value $\mathbf{s}_0\in\mathcal{S}$. We assume access to a fixed pathway
\begin{equation}
    \Gamma (\sb_0, \cdot):
    \mathcal{S}
    \rightarrow
    \mathbb{R}^d,
    \qquad
    \Gamma(\mathbf{s}_0,\mathbf{s}_0)=0,
    \label{eq:fixed-pathway}
\end{equation}
which describes how changing $\mathbf{s}$ away from $\mathbf{s}_0$ directly enters the model response.

Notably, the pathway does \emph{not} need to describe the full system dynamics. It only needs to encode the part of the response that is scientifically known for the targeted driver. For example, in our climate application, $\Gamma$ describes the immediate atmospheric effect of changing CO$_2$, while the subsequent adjustment of the climate state remains learned by the emulator.

\begin{clarificationbox}[title={}]
    \textbf{What does \emph{ReRoute} assume?} \emph{ReRoute} does \emph{not} assume that the complete response to $\mathbf{s}'$ is known. It assumes only that we know how the targeted driver first enters the system. \emph{ReRoute} then lets the learned model propagate that effect through the remaining dynamics.
\end{clarificationbox}

\subsection{Reference-Conditioned Fine-Tuning}

Let $G_{\theta_0}$ denote a pretrained emulator. \emph{ReRoute} initializes $\theta\leftarrow\theta_0$, replaces the targeted input seen by the backbone with the fixed reference $\mathbf{s}_0$, and reintroduces the factual value $\mathbf{s}$ only through the known pathway $\Gamma$:
\begin{equation}
    f_{\theta}
    (\mathbf{x},\mathbf{s},\mathbf{z})
    =
    G_\theta(\mathbf{x},\mathbf{s}_0,\mathbf{z})
    +
    \Gamma(\mathbf{s}_0,\mathbf{s}).
    \label{eq:factual-prediction}
\end{equation}

The \emph{ReRoute}d model is fine-tuned against the original training targets:
\begin{equation}
    \mathcal{L}_{\mathrm{train}}(\theta)
    =
    \mathbb{E}_{P_{\mathrm{train}}}
    \left[
      \left\|
        G_\theta(\mathbf{X},\mathbf{s}_0,\mathbf{Z})
        +
        \Gamma(\mathbf{s}_0,\mathbf{S})
        -
        \mathbf{Y}
      \right\|^2
    \right].
    \label{eq:factual-loss}
\end{equation}

\begin{intuitionbox}[title={}]
\textbf{What is the intuition behind \emph{ReRoute}?} Intuitively, the pretrained backbone can use correlations between $\mathbf{s}$ and other inputs to explain the response. After \emph{ReRoute}, the backbone always sees the same $\mathbf{s}_0$. Variation in the targeted driver can therefore enter the prediction only through $\Gamma$. Fine-tuning lets the backbone adjust to this new division of responsibility.
\end{intuitionbox}

When answering a what-if query, the backbone remains fixed at the same reference condition $\mathbf{s}_0$, while the pathway receives the requested value $\mathbf{s}'$,
as shown in \cref{fig:overview}.

\subsection{Identifiability of targeted interventions}
The reference-conditioned construction above also admits a causal interpretation. Under the structural causal model developed in~\cref{app:proofs}, \emph{ReRoute identifies the interventional response of the targeted variable from observational trajectories and a known first-level mechanism}, without requiring controlled intervention
data. The result relies on the targeted variable entering the downstream system through the supplied mechanistic pathway, observational variation being informative about the downstream response, and this response being sufficiently constrained to determine the requested pathway value.\\

\begin{theorem}[Informal]
\looseness=-1 Let $S$ be the variable targeted by a what-if query, and suppose that its direct effect on the downstream system is fully mediated by a known first-level pathway $\Gamma(\sb_0,\Sb)$. If observational variation in this
pathway identifies the response of the same downstream system, and the responses observed over factual pathway values uniquely determine the response at the target value $\Gamma(\sb_0,\sb')$, then 
    $P\!\left(X \mid \operatorname{do}(\Sb=\sb')\right)$
is identifiable from the observational distribution together with $\Gamma$. 
\end{theorem}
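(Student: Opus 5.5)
I will formalize the informal statement inside a structural causal model and argue by a chain of substitutions. The SCM is as follows. $S$ is an exogenously set driver, possibly confounded with the remaining conditions $\mathbf{Z}$ through a common cause. The first-level mechanism produces a mediator $M = \Gamma(\mathbf{s}_0, S)$ deterministically. The downstream state is generated as $X = h(M, \mathbf{Z}, U_X)$. Here $U_X$ is exogenous noise independent of $(S,\mathbf{Z})$, and, crucially, $S$ has no arrow into $X$ except through $M$ (full mediation). Intervening $\operatorname{do}(\mathbf{S}=\mathbf{s}')$ replaces the equation for $S$. It leaves the equations for $\mathbf{Z}$, $M$ and $X$ unchanged, so under the intervention $M = \Gamma(\mathbf{s}_0,\mathbf{s}')$ deterministically while $\mathbf{Z}$ keeps its observational marginal. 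The three hypotheses of the statement then read as follows:
\begin{enumerate}
\item[(i)] mediation, which is encoded in the structural equations above;
\item[(ii)] ``observational variation in the pathway identifies the response of the downstream system,'' meaning that $P(X \mid M=m, \mathbf{Z}=\mathbf{z})$ coincides with the structural kernel $P(h(m,\mathbf{z},U_X))$ for every factual $m$ in the support;
\item[(iii)] ``uniqueness of extension,'' meaning that the kernel restricted to factual pathway values $\{m : m\in\operatorname{supp}(M)\}$ uniquely determines it at $m' = \Gamma(\mathbf{s}_0,\mathbf{s}')$. An example is membership in a model class $\mathcal{H}$ in which agreement on the factual support forces agreement at $m'$.
\end{enumerate}

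The key steps proceed in order. First, by the truncated factorization for $\operatorname{do}(\mathbf{S}=\mathbf{s}')$ and mediation,
\[
P\bigl(X\mid \operatorname{do}(\mathbf{S}=\mathbf{s}')\bigr)=\int P\bigl(h(\Gamma(\mathbf{s}_0,\mathbf{s}'),\mathbf{z},U_X)\in\cdot\bigr)\,dP(\mathbf{z}).
\]
This uses only that $U_X$ is independent of $\mathbf{Z}$ and that $\mathbf{Z}$ is not a descendant of $S$. Second, by (ii), the integrand at factual $m$ equals the observable conditional $P(X\mid M=m,\mathbf{Z}=\mathbf{z})$, and $M$ is computable from $S$ via the known $\Gamma$. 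Third, by (iii), any two SCMs compatible with the observational law and with $\Gamma$ share the kernel at $m'$. Fourth, since $P(\mathbf{z})$ is observational, the whole expression is a functional of $P_{\mathrm{obs}}$ and $\Gamma$. I will phrase this identifiability step as: ``any two models agreeing on $P_{\mathrm{obs}}$ and $\Gamma$ agree on the interventional law.'' That statement is the form that is easy to machine-check, and it matches the paper's remark that the core argument was verified in Lean. Finally, I will connect the result to \eqref{eq:factual-prediction}. Because the backbone always sees $\mathbf{s}_0$, its inputs carry no information about $S$ beyond what enters through $\Gamma$. The minimizer of \eqref{eq:factual-loss} therefore estimates exactly the identified kernel, and evaluating it with $\Gamma(\mathbf{s}_0,\mathbf{s}')$ returns the interventional quantity.

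The main obstacle is the second step together with (iii). The target value $m'$ generally lies outside the observational support of $(M,\mathbf{Z})$; this is the whole point of the paper, since $S$ is entangled with $\mathbf{Z}$. The observational conditional is therefore undefined there, and positivity fails. I would first try to make (iii) carry the full weight by stating it as an injectivity condition on the hypothesis class: the restriction map $\mathcal{H}\to\{\text{kernels on } \operatorname{supp}(M,\mathbf{Z})\}$ is injective. Identification then follows by pure set-theoretic reasoning. A second subtlety is (ii) itself, since confounding between $S$ and $\mathbf{Z}$ could bias $P(X\mid M,\mathbf{Z})$. Conditioning on $\mathbf{Z}$ blocks the backdoor path only if every confounder of $M$ and $X$ is contained in $\mathbf{Z}$. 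I will state this explicitly as a conditional ignorability assumption, $U_X \perp (S,\mathbf{Z})$, rather than leave it implicit.
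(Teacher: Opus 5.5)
\textbf{Overall verdict.} Your skeleton matches the paper's proof. You reduce $\operatorname{do}(S=\bar s)$ to evaluating a downstream response at $\Gamma_\star=\Gamma(s_0,\bar s)$, recover that response observationally under an exogeneity condition, extend it to $\Gamma_\star$ via a target-determination hypothesis, and conclude in the two-model sense. However, there is a genuine (if fixable) gap, and your formalization differs from the paper's in ways worth knowing.

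\textbf{The gap: the hand-off from (ii) to (iii).} The observational law determines $P(X\mid M=m,\mathbf{Z}=\mathbf{z})$ only up to $P_{M,\mathbf{Z}}$-null sets. So (ii) is meaningful only almost everywhere, not ``for every factual $m$ in the support''. For two observationally equivalent models, all you can derive is that their structural kernels $K(m,\mathbf{z})=\operatorname{Law}(h(m,\mathbf{z},U_X))$ and $\widetilde K$ agree $P_{M,\mathbf{Z}}$-a.e. They may still differ on a null subset of the support. A hypothesis of injectivity of the restriction to $\operatorname{supp}(M,\mathbf{Z})$ therefore has no premise to act on. This bites in exactly the relevant case: a point that lies in the support but carries no mass, typical for a continuous driver, is not pinned down by almost-everywhere information.

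The paper builds this in from the start:
\begin{itemize}
\item \cref{lem:gf-observational-recovery} identifies the response map ``up to $\mu$-null sets, and only up to $\mu$-null sets''.
\item \cref{ass:gf-determination} is phrased as ``$\mu$-a.e.\ agreement within the admissible class implies agreement at $\Gamma_\star$''.
\item \cref{rem:gf-determination-regimes} separates the atom case (automatic), the in-support null case (continuity suffices), and the off-support case (a genuine extrapolation assumption).
\end{itemize}
Restate (iii) the same way: $K=\widetilde K$ $P_{M,\mathbf{Z}}$-a.e.\ implies $K(\Gamma_\star,\mathbf{z})=\widetilde K(\Gamma_\star,\mathbf{z})$ for $P_{\mathbf{Z}}$-a.e.\ $\mathbf{z}$. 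With that change, the rest of your argument closes.

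\textbf{How your formalization differs.} The paper has no separately confounded covariate. $S$ is a source with $U_s\perp U_X$ (\cref{ass:gf-exogeneity}), so the conditional law of $X$ given $\Gamma(s_0,S)=\eta$ already equals $Q_\eta$ for $\mu$-a.e.\ $\eta$, and no adjustment is needed. Full mediation is also derived rather than posited. It follows from additivity at the direct children (\cref{ass:source-additivity}) plus recursive solvability, giving $X=\Phi(\Gamma(s_0,S),U_X)$ with $\Phi$ defined at every pathway value (\cref{lem:gf-interface-representation}).

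Your backdoor version is closer to the SST--CO$_2$ story and allows non-additive entry. The price is that it needs $\mathbf{Z}$ observed and a non-descendant of $S$. Its extrapolation burden is also heavier. You need the kernel at all $(\Gamma_\star,\mathbf{z})$ with $\mathbf{z}\in\operatorname{supp}(P_{\mathbf{Z}})$. Under strong $S$--$\mathbf{Z}$ correlation these points lie off the joint support even when $\Gamma_\star\in\operatorname{supp}(M)$, whereas the paper needs determination at a single point.

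\textbf{The closing link to the loss.} Your link to \eqref{eq:factual-loss} is outside the statement and is not correct as argued. The backbone does see $\mathbf{z}$, which in your model carries information about $S$. What actually makes the fit target the right object is additivity of $h$ in $m$: then $X-\Gamma(s_0,S)$ no longer depends on $S$. For general $h$, the minimizer averages over $P(S\mid\mathbf{z})$, and adding $\Gamma(s_0,\mathbf{s}')$ does not return the interventional response. That additivity is exactly the source-additivity assumption you dropped.
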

Here, $\operatorname{do}(\Sb=\sb')$ means considering a hypothetical system in which $\Sb$ is directly set to $\sb'$, independently of the factors that normally determine it. This differs from conditioning on $\Sb=\sb'$, which only selects cases where $\Sb$ naturally takes the value $\sb'$ and thus preserves the relationships between $\Sb$ and the factors that influence it. Formal assumptions, statements, and Lean-verified proofs are given in
\cref{app:proofs}.\\

\begin{clarificationbox}[title={}]
\textbf{Why observational data can be sufficient?} Intuitively, the known mechanism changes the identification problem.
Without additional structure, correlated observational drivers may be interchangeable to a flexible predictor: the model can fit the factual distribution while assigning the response to the wrong input. 
\emph{ReRoute} removes this ambiguity at the point where the targeted input $\Sb$ enters the model. The backbone always receives the same reference value $\sb_0$, so variation in $\Sb$ can affect the prediction only through $\Gamma(\sb_0,\Sb)$.
\end{clarificationbox}

Observational trajectories then provide examples of how the unknown downstream system responds to different values of this pathway. \emph{ReRoute} does not need to know those downstream mechanisms explicitly: it learns
their response from the factual data. A what-if query simply evaluates the same learned downstream response at the pathway value corresponding to the
requested intervention $ \Gamma(\sb_0,\sb')$. Thus, the mechanistic pathway supplies the missing attribution, i.e., how the
targeted variable enters the system, while observational data supply the remaining downstream dynamics.

\subsection{\emph{ReRoute} for Autoregressive Systems}
\label{app:autoregressive}

Many scientific emulators, evolve the system autoregressively. Let
    $\mathbf{x}_{t+1}
    =
    f^\ast(\mathbf{x}_t,\mathbf{s}_t,\mathbf{z}_t)$
denote the true one-step transition, where $\mathbf s_t$ is the targeted forcing and $\mathbf z_t$ collects the remaining inputs.
During fine-tuning, \emph{ReRoute} predicts    $\widehat{\mathbf{x}}_{t+1}
    =
    G_\theta(\mathbf{x}_t,\mathbf{s}_0,\mathbf{z}_t)
    +
    \Gamma(\mathbf{s}_0,\mathbf{s}_t),$
    \label{eq:autoregressive-factual}
where the backbone is conditioned on the fixed reference value $\mathbf s_0$, while the observed change in the targeted forcing enters through the mechanistic pathway.
For a what-if query with intervention $\mathbf s'_t$, only the pathway input is changed:
    $\widehat{\mathbf{x}}_{t+1}
    =
    G_\theta(\widehat{\mathbf{x}}_t,\mathbf{s}_0,\mathbf{z}_t)
    +
    \Gamma(\mathbf{s}_0,\mathbf{s}'_t).$
The predicted state is fed back at the next timestep. Thus, the pathway specifies only the first-level response to the intervention; its subsequent effects propagate through the learned autoregressive dynamics.

\section{Experiments and Results}
\label{sec:experiments}

\subsection{Advection--Diffusion Transport}
\label{sec:advection_diffusion}

\looseness=-1 We first evaluate \emph{ReRoute} in a fully controlled dynamical system, where both the governing dynamics and the counterfactual response are known exactly. Advection--diffusion is a canonical model for the transport and mixing of quantities such as heat, pollutants, and atmospheric tracers, and appears broadly across physics, engineering, and environmental science. This provides a clean test of whether \emph{ReRoute} can disentangle strongly correlated controls in a fundamental transport problem before moving to more complex settings. We consider the two-dimensional advection--diffusion equation
on a periodic domain,
\begin{equation}
\frac{\partial u}{\partial t}
+
c\frac{\partial u}{\partial x}
=
D\nabla^2 u,
\end{equation}
where $c$ is the advection velocity and $D$ is the diffusion coefficient. Training trajectories contain strongly correlated $(c,D)$ pairs, with Pearson correlation $r=0.989$, allowing the model to achieve low factual error within this correlated training regime without necessarily learning the effects of $c$ and $D$ independently. We generate 512 training trajectories from 32 initial conditions, each containing
100 transitions on a $64\times64$ grid, with independent validation and test
initial conditions.

Our baseline is a Fourier neural operator (FNO)~\citep{li2020fourier} trained from scratch
on these correlated trajectories. \emph{ReRoute} is initialized from this pretrained FNO and fine-tuned only on the original factual data. Because the dynamics are known analytically, we can construct the exact one-step pathway
\begin{equation}
\Gamma(u_t;c,D)
=
\Phi^h_{c,D}(u_t)
-
\Phi^h_{c_0,D_0}(u_t),
\end{equation}
with reference values $c_0=1$ and $D_0=0.010$. Here, $\Phi^h_{c,D}(u_t)$ denotes the field given by advancing $u_t$ by one interval $h$ with parameters $c$ and $D$. We evaluate both rerouting
$D$ alone and jointly rerouting $(c,D)$. The latter clamps both controls in
the backbone and introduces their joint effect through $\Gamma$. We evaluate 100-step autoregressive rollouts from held-out initial conditions over a $41\times41$ $(c,D)$ grid spanning $[0.5,1.5]\times[0.002,0.018]$, including combinations far from the correlated training support. Our main metric is the RMSE of the
predicted counterfactual response relative to the exact solution. Dataset construction and FNO pretraining are detailed in \cref{app:advection_data}, the exact pathway is derived in \cref{app:advection_pathway}, and the fine-tuning procedure and metric definitions are given in \cref{app:advection_evaluation}.

\looseness=-1 \paragraph{\emph{ReRoute} separates strongly correlated transport controls.}
\Cref{fig:four_grids} shows counterfactual-response error across the full $(c,D)$ parameter grid, extending beyond the correlated parameter combinations seen during training. The pretrained FNO performs well near the narrow
training support but its error grows substantially as the two controls are
varied independently. Rerouting $D$ alone reduces this error even though
advection speed $c$ continues to enter through the learned backbone. Jointly
rerouting $(c,D)$ performs best across most of the grid, showing that \emph{ReRoute}
can recover the effects of multiple correlated controls when their joint
entry mechanism is known.

\paragraph{\textbf{The improvement reflects substantially smaller field-level errors.}}
For the representative out-of-distribution intervention in \cref{fig:three_fields}, the baseline produces a large and spatially
coherent response error after 100 rollout steps. Its RMSE is $0.1182$.
Rerouting $D$ reduces the error to $0.0108$, while jointly rerouting $(c,D)$
reduces it to $0.0079$, corresponding to reductions of $90.8\%$ and
$93.3\%$, respectively. Thus, the aggregate gains are not produced by
cancellation across space: \emph{ReRoute} substantially reduces the local
field-level error itself.

\begin{wrapfigure}{r}{0.5\textwidth} \vspace{-1.5mm}
    \centering
    \includegraphics[width=\linewidth]{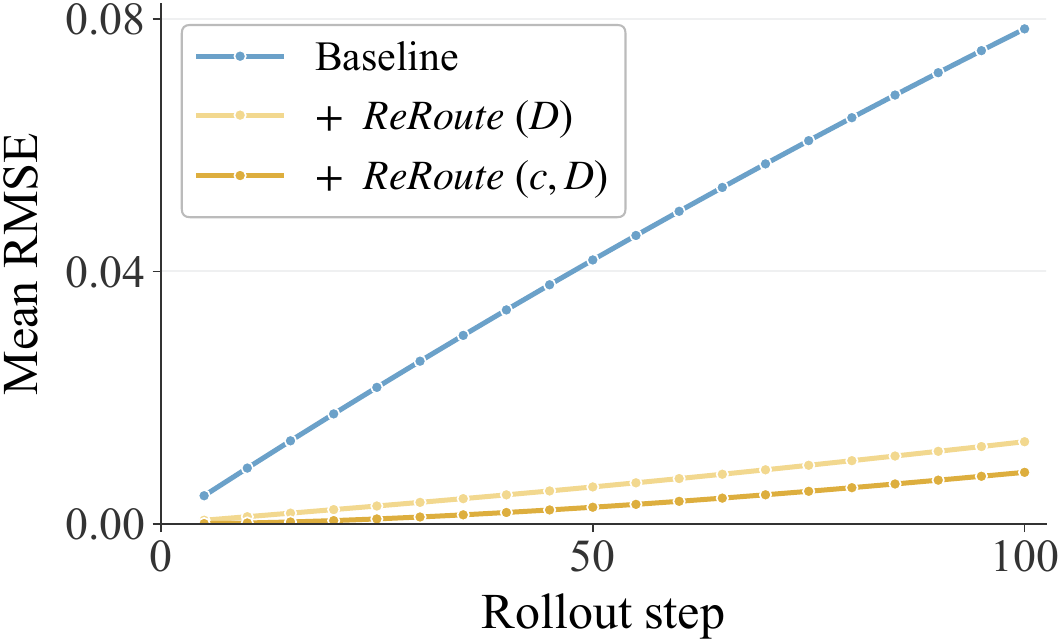}
    \caption{
    \textbf{\emph{ReRoute} remains accurate over long autoregressive rollouts.}
    Counterfactual-response RMSE over 380 held-out parameter pairs far from the training diagonal, averaged across held-out initial conditions. Fine-tuning uses 10-step rollouts; evaluation extends to 100 steps.
    }
    \label{fig:rollout_error}
    \vspace{-1.0em}
\end{wrapfigure}

\looseness=-1 \paragraph{The gains persist far beyond the fine-tuning horizon.}
\Cref{fig:rollout_error} tracks counterfactual response error over the
full 100-step autoregressive rollout, $10\times$ longer than the $10$-step
horizon used during optimization. Both \emph{ReRoute} variants maintain substantially
lower error throughout the rollout. At step 100, the baseline reaches an
average error of $0.0784$, compared with $0.0130$ ($\downarrow 83.4\%$) and $0.0082$ ($\downarrow 89.6\%$) when rerouting $D$ and $(c,D)$, respectively.

\looseness=-1 This experiment also tests two extensions beyond the formal setting of \cref{app:proofs}: the pathway depends on the evolving state and the
targeted input contains multiple controls jointly. Although the current
identification result does not directly cover these cases, the controlled
system shows that both extensions work effectively in practice.

\hfill

\subsection{Climate Emulation}

ACE2 \citep{watt2025ace2} is a family of state-of-the-art six-hour autoregressive climate emulators conditioned on boundary forcings including sea-surface temperature (SST), sea ice, solar forcing, and global-mean CO$_2$. Despite strong predictive skill, the pretrained ACE2 model exhibits unrealistic sensitivity under unseen CO$_2$ perturbations: because historical SST and CO$_2$ co-vary, the model can reproduce the observed climate while attributing changes to the wrong forcing. For example, fixing CO$_2$ removes too much near-surface warming even when historical SST and sea ice are retained \citep{watt2025ace2}. Recent work mitigates this by adding controlled random-CO$_2$ simulations \citep{clark2026disentangling}, however, these controlled experiments are never available in real-world, limiting their application in actual climate re-analysis data \citep{hersbach2020era5}. 
\emph{ReRoute} instead uses a partial radiative mechanism and requires no controlled perturbation trajectories for training. We take $\sb=\mathrm{CO}_2$ as the intervention target and \emph{ReRoute} its effect through a fixed radiative pathway, while conditioning the pretrained backbone on a reference concentration $\sb_0$. We validated \emph{ReRoute} in both large scale coupled atmosphere-ocean dynamics and real-world reanalysis data, showing substantial out-of-distribution performance improvements without additional controlled data and very light fine-tuning. Full experimental details are given in \cref{app:experiment_details}.

\subsubsection{Coupled Climate With Held-Out Counterfactual Evaluation}

\begin{figure}
    \centering
    \vspace{-1.4mm}
    \includegraphics[width=\linewidth]{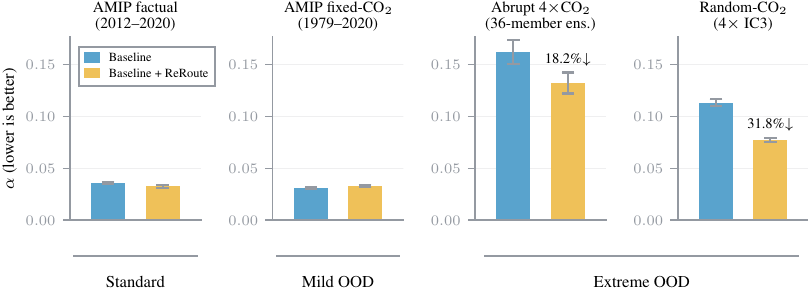}
    \caption{
    \textbf{\emph{ReRoute} becomes increasingly beneficial under severe
    SST--CO$_2$ distribution shift.}
    We compare ACE2S-SHiELD+$_{\mathrm{no-RC}}$ (Baseline) with the same pretrained checkpoint augmented with \emph{ReRoute}. Bars show the mean
    per-realization aggregate climate error $\alpha$, with error bars denoting one standard deviation across independent ensembles.
    Performance remains comparable under standard AMIP and the milder fixed-CO$_2$ intervention, while \emph{ReRoute} reduces error by $18.2\%$
    under abrupt-$4\times$CO$_2$ and by $31.8\%$ under the held-out random-CO$_2$ forcing. The abrupt-$4\times$CO$_2$ experiment uses 36 initial-condition members; the remaining experiments use five stochastic realizations.
    }
    \label{fig:shift-severity}
    \vspace{-1em}
\end{figure}

\looseness=-1
\paragraph{Experiment Setup.}
We use the released ACE2S-SHiELD+$_{\text{no-RC}}$ checkpoint as our baseline throughout this experiment. This model is trained on a mixture of AMIP data (Atmospheric Model Intercomparison Project) and equilibrium-climate simulations. AMIP simulations are essentially atmosphere-only climate simulations in which the ocean surface, including sea-surface temperatures, is prescribed rather than dynamically simulated. Crucially, the model receives no random-CO$_2$ (no-RC) training data in which SST and CO$_2$ are varied independently. \emph{ReRoute} is applied to the same pretrained checkpoint and fine-tuned with the objective function in \cref{eq:factual-loss}. We report the mean and standard deviation of the per-realization channel-averaged normalized spatial RMSE following \citet{clark2026disentangling}, together with the complete training, validation, and evaluation splits (see \cref{app:experiment_details}).

\looseness=-1
\paragraph{Standard climate conditions.}
\emph{ReRoute preserves, and slightly improves factual climate skill under standard forcing conditions.}
We first evaluate the held-out 2012--2020 AMIP factual setting, where SST and CO$_2$ retain the historical relationship represented in training. In this regime, both models achieve low aggregate error, with \emph{ReRoute} slightly improving the mean per-realization $\alpha$ from $0.0359$ to $0.0327$. Thus, rerouting the CO$_2$ input does not require sacrificing the emulator's skill under standard forcing conditions; the adapted model continues to reproduce the factual climate over multi-year autoregressive rollouts.

\looseness=-1
\paragraph{Mild distribution shift.}
\emph{ReRoute remains comparable to the baseline under a mild forcing shift.}
We hold CO$_2$ fixed over the 1979--2020 AMIP rollout while retaining the historical SST evolution, introducing a moderate mismatch between forcings that usually co-vary in training. Here, \emph{ReRoute} obtains $0.0330$ across stochastic realizations, close to the baseline's  $\alpha=0.0311$. The small difference shows that the mechanistic pathway leaves performance largely unchanged when the intervention remains relatively close to the range of climate states represented by the original training data. Together with the factual AMIP experiment, this establishes that \emph{ReRoute} retains the pretrained emulator's climate skill under standard and mildly shifted forcing conditions.

\looseness=-1
\paragraph{Extreme distribution shifts.}
\emph{The mechanistic pathway becomes increasingly valuable as the forcing relationship moves further out of distribution.}
Under abrupt-$4\times$CO$_2$, CO$_2$ changes instantaneously while the atmosphere and slab ocean initially remain in the $1\times$CO$_2$ climate state, creating a forcing--state combination absent from training. Across the 36-member ensemble, \emph{ReRoute} reduces mean per-member aggregate error by $18.2\%$. The improvement is even larger in the held-out random-CO$_2$ experiment, where CO$_2$ varies independently of SST over a wide forcing range: \emph{ReRoute} reduces error by $31.8\%$, from $0.1135$ to $0.0775$. This exposes the failure mode targeted by \emph{ReRoute}: when CO$_2$ departs strongly from the correlations encoded by the observational training distribution, directly conditioning the flexible backbone on CO$_2$ becomes unreliable, whereas routing the perturbation through the known radiative pathway substantially improves the resulting climate response.

\subsubsection{Real-World Climate Reanalysis}

\looseness=-1 \paragraph{Experiment Setup.}
We next apply \emph{ReRoute} to real-world historical reanalysis, where controlled counterfactual simulations are unavailable. We use ACE2-ERA5~\citep{watt2025ace2}, which is trained directly on ERA5 reanalysis, a reconstruction of historical atmospheric conditions obtained by combining weather observations with numerical weather prediction. In these data, SST and CO$_2$ co-evolve rather than varying independently, making their individual effects difficult to disentangle. Methods that rely on randomized or otherwise controlled CO$_2$ trajectories therefore cannot be applied directly. For evaluation, following the fixed-CO$_2$ diagnostic used in \citep{watt2025ace2}, we rerun the historical experiment with CO$_2$ fixed at its 1940 value while retaining the other historical boundary conditions.
We visualize the change in time-mean near-surface temperature relative to the corresponding historical-CO$_2$ rollout,
\begin{equation}
    \Delta T_{2m}=T_{2m}^{\mathrm{fixed\ CO_2}}-
T_{2m}^{\mathrm{historical\ CO_2}}.
\end{equation}
Because the corresponding real-world counterfactual is unobserved, we use this experiment as a \emph{qualitative test} of whether the emulator preserves the surface warming encoded by the prescribed boundary conditions.

\begin{figure}[t] \vspace{-1.4mm}
    \centering
    \includegraphics[width=0.94\linewidth]{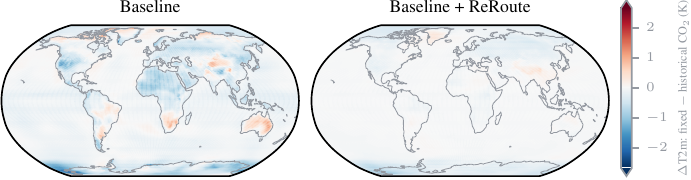}
   \vspace{-1.5mm} \caption{\looseness=-1 \textbf{\emph{ReRoute} preserves surface warming under fixed CO$_2$ in
    ACE2-ERA5, i.e. near-zero values in the heatmap.}
    Shown is the change in time-mean near-surface temperature when CO$_2$ is held fixed at its 1940 value instead of following the historical trajectory, while retaining the other historical boundary conditions such as SST and sea-ice.
    The ACE2-ERA5 baseline exhibits widespread cooling, indicating that much of its historical warming is spuriously tied to the CO$_2$ input.
    With \emph{ReRoute}, the spurious cooling is substantially corrected over most of the globe, showing that the model retains the surface warming driven by the prescribed boundary conditions.}
    \label{fig:res-ace2-era5} \vspace{-3.5mm}
\end{figure}

\looseness=-1
\paragraph{Results.}
\emph{ReRoute preserves substantially more surface warming under fixed CO$_2$ using reanalysis data alone.}
As shown in \cref{fig:res-ace2-era5}, the ACE2-ERA5 baseline exhibits widespread negative $\Delta T_{2m}$ when CO$_2$ is fixed, indicating that much of the historical surface warming disappears despite retaining the same evolving SST and sea-ice boundary conditions, with the strongest response over land and high latitudes. With \emph{ReRoute}, the fixed- versus historical-CO$_2$ difference becomes much smaller over most of the globe, so the emulator retains most of the surface warming even when the historical CO$_2$ trajectory is removed. This setting is particularly important because ERA5 provides no randomized controlled trajectories in which SST and CO$_2$ are varied independently. \emph{ReRoute} instead uses the known CO$_2$ pathway to separate the targeted forcing response while learning the remaining climate dynamics entirely from historical reanalysis. \emph{ReRoute} also improves the generated tropical-cyclone climatology: over the held-out 2001--2010 period, the same TempestExtremes tracker detects 49.8 cyclones yr\(^{-1}\) in the baseline and 43.6 yr\(^{-1}\) with ReRoute, much closer to ERA5's 40.3 yr\(^{-1}\) (\cref{fig:tc-tracks}).

\section{Discussion}
\label{sec:discussion}

\looseness=-1
Predicting how a scientific system responds to an intervention usually requires either intervention data or a complete mechanistic model, both of which are often unavailable in practice. We introduced \textsc{\emph{ReRoute}}, a lightweight framework for targeted scientific what-if queries that combines \emph{observations + mechanism}. \emph{ReRoute} uses partial mechanistic knowledge of how the queried variable enters the system, while observational data identify the downstream interactions, feedbacks, and long-term response. Under explicit assumptions on this first-level pathway, observational variation, and the downstream response class, we show that the targeted interventional distribution is identifiable. By requiring knowledge only of this targeted interface rather than the full governing system, \emph{ReRoute} provides a practical route to mechanism-guided what-if prediction in scientific domains where controlled experiments are costly or impossible and mechanistic knowledge is incomplete.

\looseness=-1
Across controlled and realistic settings, \emph{ReRoute} consistently improves what-if prediction under distribution shift, with the largest gains under the strongest shifts. In controlled advection--diffusion, it cuts counterfactual response error by up to $89.6\%$ far beyond training support, over rollouts $10\times$ longer than the fine-tuning horizon. Under severe SST--CO$_2$ shifts in ACE2, it reduces aggregate error by $18.2$--$31.8\%$ without random-CO$_2$ supervision, while preserving skill under standard and mild shifts. In an ERA5-trained emulator, it substantially restores near-surface warming under fixed CO$_2$ and yields a tropical cyclone climatology closer to ERA5 using historical reanalysis alone. Together, these results show that targeted mechanistic knowledge can deliver robust gains across systems and shifts with light fine-tuning, no counterfactual data, and a fraction of counterfactual retraining cost.

\textbf{Limitations.} The current theory is limited to a single targeted variable with a state-independent pathway, although our experiments indicate that \emph{ReRoute} can remain effective in practice with state-dependent pathways and multiple jointly-rerouted variables. The framework also assumes near-exact knowledge of how the targeted variable enters the system, which can be limiting in applications where the mechanism is only approximately known. While \emph{ReRoute} may remain beneficial under modest pathway misspecification, imperfect mechanistic knowledge could limit its effectiveness.

\subsection*{AI use statement}

Large language models (LLMs) were used to assist with manuscript preparation and experimental presentation, including refining the writing, generating figures, and creating tables. The core research ideas, methodology, experimental design, and scientific conclusions were developed by the authors.

\subsection*{Ethics statement}

This work uses existing climate reanalysis data, physics-based simulations, and pretrained scientific models; it involves no human subjects or sensitive personal data. \emph{ReRoute} is intended for scientific research and should not be treated as a substitute for established physical climate models or as direct operational or policy guidance. Its predictions may inherit limitations from the underlying data, pretrained emulator, and supplied mechanistic pathway, especially under extreme distribution shifts. We therefore report assumptions, controlled evaluations where available, and known failure modes.

\subsection*{Reproducibility statement}

We provide detailed descriptions of the datasets, training and evaluation splits, model checkpoints, pathway construction, fine-tuning procedure, hyperparameters, rollout protocols, ensemble settings, and evaluation metrics in \cref{app:experiment_details}. The appendix also specifies the exact experimental configurations used for each climate benchmark and the computation of all reported metrics. Upon acceptance, we will release the curated implementation of \emph{ReRoute}, together with the training and evaluation configurations and scripts required to reproduce the main results and figures.

\subsubsection*{Acknowledgments}
We thank Pedram Hassanzadeh, Spencer Clark, and Oliver Watt-Meyer for helpful discussions. D.Y. is supported by a Google PhD Fellowship. A.A. is supported by the Bren Endowed Chair, the Office of Naval Research (MURI Grant N00014-23-1-2654), and the AI2050 Senior Fellows Program at Schmidt Sciences. This work was conducted at Caltech during D.Y.'s research visit and V.D.'s postdoctoral appointment.

\bibliography{refs}
\bibliographystyle{refs}

\clearpage

\appendix

\addcontentsline{toc}{section}{Appendix} %
\part{Appendix} %
{
  \hypersetup{linkcolor=pcpurple}
  \parttoc
}

\hfill 

\section{A Causal Perspective of ReRoute}
\label{app:proofs}
This section gives a causal identification analysis of \emph{ReRoute}. We formalize when a target
intervention can be recovered from observational data together with partial mechanistic
knowledge, and make explicit the assumptions required for this recovery.

\subsection{Preliminaries}

We consider an acyclic structural causal model (SCM) with a distinguished
\emph{source variable} $S$ and downstream variables
$X:=(X_1,\ldots,X_N).$
Throughout this section, interventions act on $S$.

\begin{definition}[Structural causal model]\label{def:scm}
An SCM is a tuple
$
\mathcal M=(\mathcal U,\mathcal V,\mathcal F,P_U),
$
where
$
\mathcal U={U_s,U_1,\ldots,U_N},
$
and
$
\mathcal V={S,X_1,\ldots,X_N}
$
are the exogenous and endogenous variables,
$P_U$ is the joint distribution of the exogenous variables, and
$\mathcal F = {f_s, f_1, \dots, f_N}$ specifies the structural mechanisms.
We assume that $S$ is a source node, so
\begin{equation}
\begin{aligned}
S &:= f_s(U_s), \
X_i &:= f_i\bigl(X_{\pa_{\mathcal M}(i)},U_i\bigr),
\qquad i=1,\ldots,N.
\label{eq:prelim-structural-equations}
\end{aligned}
\end{equation}

The associated acyclic graph $G_{\mathcal M}$ contains an edge
$V_k\to V_i$ whenever the structural assignment for $V_i$ depends on $V_k$.
We write $\pa_{\mathcal M}(i)$ for the parents of $X_i$.
\end{definition}

Because the model is acyclic, its structural equations can be solved recursively.
The following parameterized lemma gives the existence, uniqueness, and measurability
needed both for the original SCM and for the modified system introduced in
\cref{def:gf-forced-system}.

\hfill 

\begin{assumption}[Measurability and regularity conventions]
\label{ass:gf-regularity}
Throughout this section we use the following conventions.
\begin{enumerate}[left=1.5em]
\item The source takes values in a measurable space $\mathcal S$, and each downstream
variable is real valued, so that $X$ takes values in $\R^{N}$. Thus the
downstream state space is standard Borel, and regular conditional
distributions of $X$ exist and are unique up to null sets.
\item $P_U$ is a probability measure on the exogenous space. It may be an arbitrary joint
law: the coordinates of $U_X:=(U_1,\ldots,U_N)$ need not be independent, so latent
confounding among the downstream variables is allowed.
\item Every structural mechanism is \emph{jointly} measurable in all of its arguments:
$f_s$ is measurable, and each $f_i$ is measurable on the product of its
parents' value spaces with $\mathcal U_i$.
\item Every mapping declared \emph{known} below, including the source-interface
response $\Gamma(s_0,\cdot)$ of \cref{ass:known-interface}, is measurable.
\end{enumerate}
\end{assumption}

\hfill 

\begin{lemma}[Recursive well-posedness]
\label{lem:gf-wellposed}
Let $G$ be a finite acyclic graph on ${1,\ldots,N}$, let $\Theta$ and
$\mathcal V_1,\ldots,\mathcal V_N$ be measurable spaces, and for each $i$ let

$$
    g_i:\R^{|\pa_G(i)|}\times\mathcal V_i\times\Theta\longrightarrow\R
$$

be jointly measurable. Then for every $\theta\in\Theta$ and every
$v=(v_1,\ldots,v_N)$ the system

$$
    x_i=g_i\bigl(x_{\pa_G(i)},v_i,\theta\bigr),
    \qquad i=1,\ldots,N,
$$

has a unique solution $x=\Psi(\theta,v)\in\R^N$, and the resulting map
$\Psi:\Theta\times\prod_i\mathcal V_i\to\R^N$ is jointly measurable.
\end{lemma}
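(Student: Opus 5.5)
The plan is to fix a topological order of $G$ and build both the solution and its measurability by induction along that order. Because $G$ is finite and acyclic, we may relabel the nodes as $\pi(1),\ldots,\pi(N)$ so that every parent of $\pi(k)$ appears among $\pi(1),\ldots,\pi(k-1)$. The system then decouples: the equation for $x_{\pi(k)}$ involves only $v_{\pi(k)}$, $\theta$, and coordinates that are already determined.

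\emph{Existence and uniqueness.} Fix $\theta$ and $v$. The equation for $x_{\pi(1)}$ has no parent arguments, so it prescribes $x_{\pi(1)}=g_{\pi(1)}(v_{\pi(1)},\theta)$ uniquely. For the inductive step, suppose $x_{\pi(1)},\ldots,x_{\pi(k-1)}$ are uniquely determined by any solution. The equation for $\pi(k)$ then forces
\[
x_{\pi(k)}=g_{\pi(k)}\bigl(x_{\pa_G(\pi(k))},v_{\pi(k)},\theta\bigr).
\]
Since its right-hand side is already fixed, every solution must agree with this one, which gives uniqueness. Conversely, the vector built this way satisfies all $N$ equations by construction, which gives existence. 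Define $\Psi(\theta,v)$ to be this vector.

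\emph{Joint measurability.} Define maps $\Psi_{\pi(k)}:\Theta\times\prod_i\mathcal V_i\to\R$ by the same recursion, and prove inductively that each is measurable with respect to the product $\sigma$-algebra. The base case composes $g_{\pi(1)}$ with the coordinate projection $(\theta,v)\mapsto(v_{\pi(1)},\theta)$. Projections are measurable for product $\sigma$-algebras, and $g_{\pi(1)}$ is jointly measurable by hypothesis, so the composition is measurable. For the inductive step, the map
\[
(\theta,v)\mapsto\bigl((\Psi_j(\theta,v))_{j\in\pa_G(\pi(k))},\,v_{\pi(k)},\,\theta\bigr)
\]
into $\R^{|\pa_G(\pi(k))|}\times\mathcal V_{\pi(k)}\times\Theta$ is measurable, because a map into a product is measurable precisely when each of its coordinates is. Each coordinate here is either an earlier $\Psi_j$, measurable by the induction hypothesis, or a projection. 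Composing with the jointly measurable $g_{\pi(k)}$ shows that $\Psi_{\pi(k)}$ is measurable. Finally, $\Psi=(\Psi_1,\ldots,\Psi_N)$ is measurable into $\R^N$ by the same coordinatewise criterion, using that the Borel $\sigma$-algebra of $\R^N$ equals the product of the Borel $\sigma$-algebras of the copies of $\R$.

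\emph{Main obstacle.} The only delicate point is measurability, not existence. The argument uses joint measurability of the $g_i$ in all arguments at once; separate measurability in each argument would not suffice. It also uses the coordinatewise characterization of measurability into a product $\sigma$-algebra, together with the identification $\mathcal B(\R^N)=\mathcal B(\R)^{\otimes N}$, which holds because $\R$ is second countable. With these two facts stated explicitly, the induction is routine.
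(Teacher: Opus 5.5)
Your proposal is correct and follows the paper's proof: fix a topological order of $G$, get existence and uniqueness by recursion along it, and obtain joint measurability by composing each jointly measurable $g_i$ with the previously constructed measurable maps. The only difference is that you state explicitly the product-$\sigma$-algebra facts the paper leaves implicit, namely the coordinatewise criterion for measurability into a product and $\mathcal B(\R^N)=\mathcal B(\R)^{\otimes N}$.
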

\begin{proof}
Choose a topological order of $G$. At each vertex, the right-hand side depends
only on variables earlier in the order, so its value is uniquely determined by
values already computed. Recursing through the order gives existence and
uniqueness. The same recursion proves joint measurability, since at each step
the jointly measurable map $g_i$ is composed with previously constructed
jointly measurable maps of $(\theta,v)$.
\end{proof}

\looseness=-1 Apply \cref{lem:gf-wellposed} to
\eqref{eq:prelim-structural-equations}, taking $G$ to be the restriction of
$G_{\mathcal M}$ to the downstream variables and $\Theta=\mathcal S$ to carry
the source value. Then $X$ is a measurable function of $(S,U_X)$, and hence of
$(U_s,U_X)$. We will apply the same lemma in
\cref{def:gf-forced-system} with $\Theta=\mathcal G$. 

\hfill

\textbf{Observational Distribution.}
Sampling $U\sim P_U$ and evaluating the structural equations induces the
observational distribution $P_{S,X}^{\mathcal M}$ of $(S,X)$.
We write $P_X^{\mathcal M}$ for its $X$-marginal. \\

\begin{definition}[Source intervention]\label{def:source-intervention}
Let $S$ be a source node in the SCM (\cref{def:scm}).
For an intervention value $\bar s\in\mathcal S$, the intervention
$do(S=\bar s)$ replaces the structural equation $f_s$ by
$S:=\bar s$,
while leaving all other structural equations and the exogenous distribution
$P_U$ unchanged.
Let $\mathcal M_{do(S=\bar s)}$ denote the resulting intervened SCM. The corresponding
\textit{interventional distribution} of the downstream variables $X$ is
$P_X^{\mathcal M;do(S=\bar s)}$.
\end{definition}

\looseness=-1 \paragraph{Overview.}
Our goal is to identify
$P_X^{\mathcal M;do(S=\bar s)}$
from the observational distribution $P_{S,X}^{\mathcal M}$ and a known
\emph{source-interface response} $\Gamma(s_0,s)$, without specifying the other
downstream causal mechanisms.
The argument has three steps.
First, $do(S=\bar s)$ fixes the known pathway at
$\Gamma_\star:=\Gamma(s_0,\bar s)$,
so the causal target becomes the downstream response at $\Gamma_\star$.
Second, under source exogeneity, observational variation in
$\Gamma(s_0,S)$ identifies the downstream response map for
$\mu$-almost every observed pathway value, where $\mu$ is the law of
$\Gamma(s_0,S)$.
Third, a target-determination condition converts this
$\mu$-almost-everywhere identification into identification at the particular
point $\Gamma_\star$, including when the target lies outside the observational
distribution.
These three steps give identification in the two-model sense of
\cref{def:gf-identifiable}.

\begin{figure}[t]
\centering
\begin{tikzpicture}[
font=\normalsize,
node distance=5mm,
roadmapbox/.style={
draw,
rounded corners=2pt,
text width=0.78\linewidth,
minimum height=12mm,
align=center,
inner xsep=6pt,
inner ysep=5pt
},
arr/.style={-{Latex[length=2.2mm]},thick}
]

\node[
roadmapbox,
fill=pcblue!5,
draw=pcblue!65
] (s1) {
{\bfseries\color{pcblue} Step 1: Reduce the intervention}\\[1.5mm]
$do(S \!\!\!\!=\!\!\!\!\bar s) \,$ fixes the known interface at $ \, \Gamma_\star \!\!\!\!= \!\!\!\!\Gamma(s_0,\bar s)$\\[0.5mm]
$\displaystyle
P_X^{\mathcal M;do(S=\bar s)} \!\!\!\!\!\!\!\!\!\!= \!\!\!\!\!\!\!\!\!\! Q_{\Gamma_\star}
$
};

\node[
roadmapbox,
fill=pcgold!7,
draw=pcgold!70,
below=of s1
] (s2) {
{\bfseries\color{pcgold} Step 2: Recover observed responses}\\[1.5mm]
Under source exogeneity the observational law factorizes as\\[0.5mm]
$\displaystyle
P^{\mathcal M}_{\Gamma(s_0,S),X} \!\!\!\!= \!\!\!\!\mu \!\!\otimes \!\! \mathcal C
$,
\quad so $\mathcal C$ is determined $\mu$-a.e.
};

\node[
roadmapbox,
fill=pcgreen!6,
draw=pcgreen!65,
below=of s2
] (s3) {
{\bfseries\color{pcgreen} Step 3: Determine the target response}\\[1.5mm]
Target determination extends the $\mu$-a.e.\ identified response to 
$\displaystyle Q_{\Gamma_\star}$
};

\node[
roadmapbox,
fill=pcgreen!3,
draw=pcgreen!50,
below=of s3
] (out) {
{\bfseries Target identified}\\[1mm]
$\displaystyle
P_X^{\mathcal M;do(S=\bar s)} \!\!\!\!\!\!\!\!\!\!=\!\!\!\!\!\!\!\!\!\! Q_{\Gamma_\star}
$
};

\draw[arr] (s1) -- (s2);
\draw[arr] (s2) -- (s3);
\draw[arr] (s3) -- (out);

\end{tikzpicture}

\caption{Roadmap of the identification argument.}
\label{fig:theory-roadmap} \vspace{3mm}
\end{figure}

\hfill 

\subsection{Source-Interface Reduction}

We first reduce an intervention on $S$ to a change at a lower-dimensional
interface whose response is known.

\begin{assumption}[Known source interface]\label{ass:known-interface}
Let

$$
    D:=\ch_{G_{\mathcal M}}(S)\subseteq\{1,\ldots,N\}
$$

be the complete set of direct children of $S$, and fix a reference source
value $s_0\in\mathcal S$. For every $j\in D$, assume that the
\emph{source-interface response}

$$
    \Gamma_j(s_0,s),
$$

is a known measurable function of $s\in\mathcal S$. Collect these responses as
\begin{equation}
\Gamma(s_0,s)
:=
\bigl(\Gamma_j(s_0,s)\bigr)_{j\in D}
\in\mathcal G:=\R^{|D|}
\label{eq:gf-interface}
\end{equation}
with
$
\Gamma(s_0,s_0)=0.
$
\end{assumption}

The reference value $s_0$ enters through the notation $\Gamma(s_0,s)$ and
through the decomposition of the child mechanisms in
\cref{ass:source-additivity}.

\begin{assumption}[Source-additive child mechanism]
\label{ass:source-additivity}
For every direct child $j\in D$, there exists an unknown jointly measurable
mechanism $r_j$ such that
\begin{equation}
X_j
= r_j\left(
X_{\pa_{\mathcal M}(j)\setminus{{S}}},
U_j
\right)
+
\Gamma_j(s_0,S)
.
\label{eq:gf-source-additive-child}
\end{equation}
\end{assumption}

Equivalently, \eqref{eq:gf-source-additive-child} says that for all
$s,s'\in\mathcal S$, all parent configurations $x$, and all disturbances $u$,
\begin{equation}
f_j(x,s,u)-f_j(x,s',u)
=\Gamma_j(s_0,s)-\Gamma_j(s_0,s').
\label{eq:gf-additivity-contrast}
\end{equation}
Thus the absolute contribution of $S$ at the reference value $s_0$ need not be
known; it is absorbed into $r_j$. No additive structure is imposed on the
remaining downstream mechanisms or on how the other parents and $U_j$ enter
$r_j$.

The assumption does require the source contribution to be homogeneous at the
interface. By \eqref{eq:gf-additivity-contrast}, changing $S$ changes $X_j$ by
the same amount for every configuration of the other parents and every value
of $U_j$. Thus the direct source contribution cannot be modified by those
variables, although effect modification can still occur downstream.\\

\begin{remark}[Intuition on $\Gamma$]
\label{rem:gf-interface-interpretation}
By \eqref{eq:gf-additivity-contrast}, $\Gamma_j(s_0,s)$ is the
\emph{direct source contribution} at $X_j$: the unit-level change in $X_j$
caused by moving $S$ from $s_0$ to $s$ while holding the other parents of
$X_j$ and the disturbance $U_j$ fixed.
It is therefore a structural contrast local to the edge $S\to X_j$.
\Cref{ass:source-additivity} requires this contrast to be constant across
units and parent configurations.

This direct contribution is not generally the marginal interventional
contrast at $X_j$. From \eqref{eq:gf-source-additive-child}, assuming the
expectations below exist,
\begin{equation}
\begin{aligned}
&\E\bigl[X_j\mid do(S=s)\bigr]-\E\bigl[X_j\mid do(S=s_0)\bigr] 
=
\bigl(\Gamma_j(s_0,s)-\Gamma_j(s_0,s_0)\bigr) \\
&\qquad\qquad\qquad\qquad\qquad\qquad\qquad\qquad\qquad\qquad
+\E\bigl[r_j\bigl(X^{do(S=s)}_{\pa(j)\setminus\{S\}},U_j\bigr)\bigr]
\\
&\qquad\qquad\qquad\qquad\qquad\qquad\qquad\qquad\qquad\qquad -\E\bigl[r_j\bigl(X^{do(S=s_0)}_{\pa(j)\setminus\{S\}},U_j\bigr)\bigr],
\end{aligned}
\label{eq:gf-total-vs-direct}
\end{equation}
whose first bracket is $\Gamma_j(s_0,s)$ under the normalization of
\cref{ass:known-interface}.
The final two terms need not cancel. For example, another parent of $X_j$ may
itself be a descendant of $S$, such as a second child $k\in D$ with an edge
$k\to j$. The marginal contrast at $X_j$ then contains both the direct
contribution and an indirect contribution through $k$. Only the direct
contribution is assumed known.

The direct and marginal contrasts coincide under the following additional
condition:
\begin{equation}
\text{\emph{(ND)}}\qquad
\text{for every } j\in D,\ \text{no parent of } X_j \text{ other than } S
\text{ is a descendant of } S .
\label{eq:gf-nd}
\end{equation}
Under \eqref{eq:gf-nd},
$X_{\pa(j)\setminus{S}}$ is unaffected by the intervention, so the final two
terms of \eqref{eq:gf-total-vs-direct} cancel and
$\Gamma_j(s_0,s)$ can be obtained from a single-source experiment.
Without \eqref{eq:gf-nd}, $\Gamma$ must instead come from mechanistic knowledge
of the edge $S\to X_j$. None of the results below require
\eqref{eq:gf-nd}.
\end{remark}

\hfill \\

\begin{definition}[Forced downstream system]
\label{def:gf-forced-system}
Fix the reference value $s_0$ from \cref{ass:known-interface}, and let

$$
    U_X:=(U_1,\ldots,U_N),
$$

denote the $X$-associated exogenous variables, distributed as $P_{U_X}$.
For a pathway value $\eta=(\eta_j)_{j\in D}\in\mathcal G$, define the
\emph{forced downstream system} by replacing $\Gamma_j(s_0,S)$ with the
constant $\eta_j$ in every direct-child equation
\eqref{eq:gf-source-additive-child}, while leaving all other structural
equations unchanged:
\begin{equation}
x_j = r_j\bigl(x_{\pa_{\mathcal M}(j)\setminus\{S\}},u_j\bigr)+\eta_j
\quad (j\in D),
\qquad
x_i = f_i\bigl(x_{\pa_{\mathcal M}(i)},u_i\bigr)
\quad (i\notin D).
\label{eq:gf-forced-equations}
\end{equation}

\hfill 

By \cref{lem:gf-wellposed}, applied with parameter space
$\Theta=\mathcal G$, the system \eqref{eq:gf-forced-equations} has a unique
solution for every $\eta\in\mathcal G$ and every $u$. The solution map
\begin{equation}
\Phi:\mathcal G\times\prod_i\mathcal U_i\longrightarrow\R^N,
\qquad
x=\Phi(\eta,u),
\label{eq:gf-forced-system}
\end{equation}
is jointly measurable. With the model's own exogenous variables we write
$X=\Phi(\eta,U_X)$.
\end{definition}

The value $\eta$ may be any point in $\mathcal G$, not only a value of the form
$\Gamma(s_0,s)$. This lets us define the response at the target
$\Gamma_\star$ independently of which pathway values occur observationally.

Neither $\Phi$ nor the downstream causal graph is assumed known.
\Cref{def:gf-forced-system} only states that $\Phi$ exists and is measurable.
For $i\notin D$, the equations in \eqref{eq:gf-forced-equations} do not involve
$S$ because $D=\ch_{G_{\mathcal M}}(S)$ contains all direct children of $S$.

\hfill 

\begin{definition}[Downstream response map]
\label{def:gf-response-map}
For each pathway value $\eta\in\mathcal G$, let $Q_\eta$ denote the distribution
of
\begin{equation}
X=\Phi(\eta,U_X),
\qquad U_X\sim P_{U_X},
\end{equation}
that is, $Q_\eta := \bigl(\Phi(\eta,\cdot)\bigr)_{\#} P_{U_X}$.
The \emph{downstream response map} is
\begin{equation}
\mathcal C:\eta\longmapsto Q_\eta.
\label{eq:gf-response-map}
\end{equation}
Since $\Phi$ is jointly measurable and $P_{U_X}$ is a probability measure,
$\mathcal C$ is a Markov kernel from $\mathcal G$ to $\R^N$.
\end{definition}

Thus $\mathcal C$ describes how the downstream system responds to changes in
the known pathway, without requiring the downstream mechanisms themselves to
be known.

\hfill 

We now reduce the intervention on $S$ to evaluation of this response map.
The next lemma gives the corresponding unit-level representation.

\begin{lemma}[Interface representation]
\label{lem:gf-interface-representation}
Under \cref{ass:known-interface,ass:source-additivity}, the following hold pointwise on the
underlying probability space.
\begin{enumerate}[left=1.5em]
\item In the observational model,
$
X=\Phi\bigl(\Gamma(s_0,S),U_X\bigr).
$
\item In the intervened model $\mathcal M_{do(S=\bar s)}$,
$
X=\Phi\bigl(\Gamma(s_0,\bar s),U_X\bigr).
$
\end{enumerate}
\end{lemma}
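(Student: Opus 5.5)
The argument rests on the uniqueness part of \cref{lem:gf-wellposed}. We show that the realized downstream vector solves the forced system \eqref{eq:gf-forced-equations} at a suitable pathway value. Uniqueness then forces it to equal $\Phi$ evaluated there.

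We fix an outcome $\omega$ of the underlying probability space and write $s=S(\omega)$, $u=U_X(\omega)$ and $x=X(\omega)$.

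\textbf{Observational model.} By \cref{def:scm}, $x$ satisfies $x_i=f_i(x_{\pa_{\mathcal M}(i)},u_i)$ for every $i$, where for $i\in D$ the parent set contains $S$, evaluated at $s$.
\begin{itemize}[left=1.5em]
\item For $j\in D$, \cref{ass:source-additivity} rewrites this equation as $x_j=r_j(x_{\pa_{\mathcal M}(j)\setminus\{S\}},u_j)+\Gamma_j(s_0,s)$. This is exactly the $j$-th forced equation with $\eta_j=\Gamma_j(s_0,s)$.
\item For $i\notin D$, $S$ is not a parent, since $D$ is the complete child set. Hence the structural equation coincides verbatim with the forced one.
\end{itemize}
So $x$ solves \eqref{eq:gf-forced-equations} with $\eta=\Gamma(s_0,s)$ and disturbance $u$. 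The forced system is acyclic on the downstream graph: removing $S$ only deletes edges, and the parameter $\eta$ plays the role of $\theta$. \cref{lem:gf-wellposed} therefore gives a unique solution, so $x=\Phi(\Gamma(s_0,s),u)$.

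\textbf{Intervened model.} In $\mathcal M_{do(S=\bar s)}$, only $f_s$ is replaced by the constant $\bar s$. All $f_i$ and $P_U$ are unchanged, so the downstream exogenous variables are the same random variables $U_X$. The same argument, with $s$ replaced by $\bar s$, shows that the intervened solution solves the forced system at $\eta=\Gamma(s_0,\bar s)$. Uniqueness yields $X=\Phi(\Gamma(s_0,\bar s),U_X)$.

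\textbf{Main obstacle.} The only delicate point is bookkeeping. We must check that the child equation written with $f_j$ equals the $r_j$ form for every input, not just almost surely. \cref{ass:source-additivity} asserts this identity as a structural equality, and \eqref{eq:gf-additivity-contrast} holds for all $s$, including $s=\bar s$, which may lie outside the support of $S$. We must also check that the downstream recursion in \cref{lem:gf-wellposed} treats the $S$-entry purely as a parameter. With both verified, the statements hold pointwise on the probability space, and no null-set issues arise.
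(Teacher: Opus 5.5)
Your proposal is correct and follows essentially the same route as the paper. You show that the realized downstream vector solves the forced system \eqref{eq:gf-forced-equations} at $\eta=\Gamma(s_0,\sigma)$, with $\sigma=S$ observationally and $\sigma=\bar s$ under the intervention: the child equations follow from source additivity, and all other equations are unchanged because $D$ is the complete child set. You then conclude by the uniqueness part of \cref{lem:gf-wellposed}, exactly as the paper does. Your explicit remark that the additivity identity must hold structurally for every $s$, including an out-of-support $\bar s$, is a check the paper leaves implicit.
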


\begin{proof}
Let $\sigma$ denote the value of $S$: observationally
$\sigma=f_s(U_s)$, while under $do(S=\bar s)$ it equals the constant
$\bar s$.

Because $D=\ch_{G_{\mathcal M}}(S)$ contains every direct child of $S$, no
equation indexed by $i\notin D$ involves $S$. These equations therefore remain
$x_i=f_i(x_{\pa(i)},u_i)$.
For $j\in D$, \cref{ass:source-additivity} gives
$x_j=r_j(x_{\pa(j)\setminus{S}},u_j)+\Gamma_j(s_0,\sigma)$.
Hence $X$ solves exactly \eqref{eq:gf-forced-equations} with
$\eta=\Gamma(s_0,\sigma)$.
By uniqueness in \cref{lem:gf-wellposed}, the solution is
$\Phi(\Gamma(s_0,\sigma),U_X)$.
\end{proof}

\hfill 

\begin{proposition}[Intervention as pathway evaluation]
\label{prop:gf-intervention-reduction}
For an intervention $do(S=\bar s)$, define the target pathway response
\begin{equation}
\Gamma_\star
:=
\Gamma(s_0,\bar s).
\label{eq:gf-target-pathway}
\end{equation}
Under \cref{ass:known-interface,ass:source-additivity},
\begin{equation}
P_X^{\mathcal M;do(S=\bar s)}
=Q_{\Gamma_\star}.
\label{eq:gf-target}
\end{equation}
\end{proposition}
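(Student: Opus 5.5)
The plan is to derive \cref{prop:gf-intervention-reduction} directly from part~2 of \cref{lem:gf-interface-representation}, with the only real work being a careful identification of the distributions involved. First, I will note that the intervened SCM $\mathcal M_{do(S=\bar s)}$ keeps the exogenous law $P_U$ and every structural equation other than $f_s$ unchanged (\cref{def:source-intervention}). The intervened structural equations are therefore solved on the same underlying probability space, with the same $U_X\sim P_{U_X}$. Moreover, \cref{ass:source-additivity} still applies to each child $j\in D$, since it only replaced the functional form of $f_j$ and this is not touched by the intervention. Second, I will invoke \cref{lem:gf-interface-representation}(2), which says that pointwise on this space the downstream vector in the intervened model equals $X=\Phi(\Gamma(s_0,\bar s),U_X)=\Phi(\Gamma_\star,U_X)$.

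The third step is to pass from this pointwise identity to laws. By definition, $P_X^{\mathcal M;do(S=\bar s)}$ is the law of $X$ when $U\sim P_U$ and the intervened equations are evaluated. Since $X$ is the measurable function $u\mapsto\Phi(\Gamma_\star,u_X)$ of $U$, its law is the pushforward $\bigl(\Phi(\Gamma_\star,\cdot)\bigr)_{\#}P_{U_X}$. Here I use that $\Phi(\Gamma_\star,\cdot)$ is measurable as a section of the jointly measurable map from \cref{lem:gf-wellposed}. I also use that the $U_X$-marginal of $P_U$ is $P_{U_X}$, which is why the dependence on $U_s$ disappears: $U_s$ enters only through $f_s$, and the intervention has removed $f_s$. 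This pushforward is exactly $Q_{\Gamma_\star}$ by \cref{def:gf-response-map}, which gives \eqref{eq:gf-target}.

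I expect the main obstacle to be bookkeeping rather than anything substantive. One must make sure that the $\Phi$ used in the intervened model is the same solution map as in \cref{def:gf-forced-system}. One must also check that no hidden dependence between $U_s$ and $U_X$ matters. Both points work out because $\Phi$ is defined purely from $(r_j)_{j\in D}$, $(f_i)_{i\notin D}$ and $\eta$, none of which involve $S$ once $D$ is the full child set. The law of $\Phi(\Gamma_\star,U_X)$ then depends only on the marginal $P_{U_X}$, whatever the joint law with $U_s$. The uniqueness part of \cref{lem:gf-wellposed}, already used in the lemma, guarantees that the solution of the intervened system coincides with $\Phi(\Gamma_\star,U_X)$.
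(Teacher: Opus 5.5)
Your proposal is correct and follows the paper's proof essentially step for step. You invoke \cref{lem:gf-interface-representation}(ii) to get $X=\Phi(\Gamma_\star,U_X)$ in the intervened model, note that $do(S=\bar s)$ leaves the law $P_{U_X}$ unchanged, and identify the law of $X$ as the pushforward $(\Phi(\Gamma_\star,\cdot))_{\#}P_{U_X}=Q_{\Gamma_\star}$; your remarks that the section is measurable and that the joint law of $(U_s,U_X)$ plays no role (so exogeneity is not needed here) are accurate details the paper leaves implicit.
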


\begin{proof}
The intervention $do(S=\bar s)$ changes only the structural equation for $S$.
It leaves the exogenous distribution unchanged, so $U_X$ still has law
$P_{U_X}$.
By \cref{lem:gf-interface-representation}(ii),
$$
    X=\Phi(\Gamma_\star,U_X).
$$
Therefore
$$
    P_X^{\mathcal M;do(S=\bar s)}
    =\operatorname{Law}\bigl(\Phi(\Gamma_\star,U_X)\bigr)
    =\bigl(\Phi(\Gamma_\star,\cdot)\bigr)_{\#}P_{U_X}
    =Q_{\Gamma_\star},
$$

where the final equality is \cref{def:gf-response-map}.
\end{proof}

Thus the original causal problem reduces to identifying the downstream
response $Q_{\Gamma_\star}$ from observational data.

\hfill 

\subsection{Observational Identification of the Downstream Response}

We now ask what observational variation in $S$ reveals about the response map
$\eta\mapsto Q_\eta$. The key condition is that the source varies independently
of the disturbances governing the downstream system.\\ 

\begin{definition}[Observed pathway distribution and support]
\label{def:gf-observed-forcing}
The observational pathway value is $\Gamma(s_0,S)$. Its distribution and
support are
\begin{equation}
\mu
:=
P_{\Gamma(s_0,S)}^{\mathcal M}
=\bigl(\Gamma(s_0,\cdot)\bigr)_{\#}P_S^{\mathcal M},
\qquad
\mathcal G_{\mathrm{obs}}
:=
\supp(\mu),
\label{eq:gf-observed-support}
\end{equation}
where $P_S^{\mathcal M}$ is the $S$-marginal of the observational distribution
$P_{S,X}^{\mathcal M}$. The map $\Gamma(s_0,\cdot)$ is measurable by
\cref{ass:known-interface}, so $\mu$ is well defined.
\end{definition}

\hfill 

The set $\mathcal G_{\mathrm{obs}}$ describes the pathway values generated by
observed values of $S$. It is used only to describe the regimes in
\cref{rem:gf-determination-regimes}; no result below depends on it.
The target $\Gamma_\star=\Gamma(s_0,\bar s)$ need not belong to
$\mathcal G_{\mathrm{obs}}$.

\hfill 

\begin{assumption}[Source exogeneity]
\label{ass:gf-exogeneity}
The source disturbance is independent of all downstream disturbances:
\begin{equation}
U_s\perp U_X.
\label{eq:gf-exogeneity}
\end{equation}
\end{assumption}

Since $S=f_s(U_s)$ and $\Gamma(s_0,\cdot)$ is measurable,
\cref{ass:gf-exogeneity} implies
\begin{equation}
S\perp U_X,
\qquad
\Gamma(s_0,S)\perp U_X.
\label{eq:gf-pathway-exogeneity}
\end{equation}
Thus changes in the observed pathway do not change the distribution of the
downstream disturbances. Observational variation in the pathway therefore
reveals the response of the same downstream system.

\hfill 

\begin{lemma}[Observational recovery of the downstream response]
\label{lem:gf-observational-recovery}
Under \cref{ass:known-interface,ass:source-additivity,ass:gf-exogeneity}, the observational
joint law of the pathway value and the downstream variables factorizes through the response
map:
\begin{equation}
P^{\mathcal M}_{\Gamma(s_0,S),,X}
=\mu\otimes\mathcal C,
\label{eq:gf-joint-factorization}
\end{equation}
where $\mu\otimes\mathcal C$ denotes the law on $\mathcal G\times\R^N$ with first marginal
$\mu$ and conditional kernel $\mathcal C$, i.e.
$(\mu\otimes\mathcal C)(A\times B)=\int_A Q_\eta(B),\mu(\mathrm d\eta)$. Equivalently, any regular conditional distribution of $X$ given
$\Gamma(s_0,S)$ satisfies
\begin{equation}
P_{X\mid\Gamma(s_0,S)}^{\mathcal M}
(,\cdot\mid\eta)
=Q_\eta
\qquad
\text{for $\mu$-almost every }\eta\in\mathcal G.
\label{eq:gf-observational-identification}
\end{equation}
Hence the observational distribution $P_{S,X}^{\mathcal M}$ determines the
downstream response map $\eta\mapsto Q_\eta$ up to $\mu$-null sets, and only up to
$\mu$-null sets.
\end{lemma}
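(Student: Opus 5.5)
We verify the factorization on measurable rectangles and then pass to the kernel statement. Write $H:=\Gamma(s_0,S)$. By \cref{lem:gf-interface-representation}(i), $X=\Phi(H,U_X)$ holds pointwise in the observational model. By \eqref{eq:gf-pathway-exogeneity}, $H\perp U_X$, so the joint law of $(H,U_X)$ is the product measure $\mu\otimes P_{U_X}$. For measurable $A\subseteq\mathcal G$ and $B\subseteq\R^N$ we therefore compute
\begin{equation*}
P(H\in A,\,X\in B)
=\int\!\!\int \mathbf 1_A(\eta)\,\mathbf 1_B\bigl(\Phi(\eta,u)\bigr)\,P_{U_X}(\mathrm du)\,\mu(\mathrm d\eta)
=\int_A Q_\eta(B)\,\mu(\mathrm d\eta).
\end{equation*}
This uses Tonelli's theorem, which applies because $(\eta,u)\mapsto\mathbf 1_A(\eta)\mathbf 1_B(\Phi(\eta,u))$ is jointly measurable by the joint measurability of $\Phi$ from \cref{def:gf-forced-system}. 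The inner integral is exactly $Q_\eta(B)$ by \cref{def:gf-response-map}, and $\eta\mapsto Q_\eta(B)$ is measurable as a section integral, so $\mathcal C$ is a genuine Markov kernel. Rectangles form a $\pi$-system generating the product $\sigma$-algebra, so the $\pi$--$\lambda$ theorem gives \eqref{eq:gf-joint-factorization}.

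For \eqref{eq:gf-observational-identification}, take any regular conditional distribution $\kappa$ of $X$ given $H$. By definition $\int_A\kappa(\eta,B)\,\mu(\mathrm d\eta)=P(H\in A,X\in B)=\int_A Q_\eta(B)\,\mu(\mathrm d\eta)$ for all $A$. Hence, for each fixed $B$, $\kappa(\eta,B)=Q_\eta(B)$ for $\mu$-a.e.\ $\eta$. To make the null set independent of $B$, use that $\R^N$ is standard Borel (\cref{ass:gf-regularity}). Fix a countable $\pi$-system $\{B_k\}$ generating the Borel $\sigma$-algebra, for example rational boxes, and take the countable union of the exceptional null sets. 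Off this union, $\kappa(\eta,\cdot)$ and $Q_\eta$ agree on a generating $\pi$-system and are probability measures, so they coincide by Dynkin's lemma. This is the essential-uniqueness step, and the only place the standard Borel convention is needed. I expect it to be the main technical point, though it is routine.

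It remains to show that the observational law determines the response map, and only up to $\mu$-null sets.

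\textbf{Determination.} The observational law $P_{S,X}^{\mathcal M}$ determines the law of $(H,X)$ as the pushforward under the known measurable map $(s,x)\mapsto(\Gamma(s_0,s),x)$. That law in turn determines its disintegration $\mu$-a.e., by the previous paragraph.

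\textbf{Sharpness ("only up to").} Suppose two kernels $\mathcal C,\mathcal C'$ agree $\mu$-a.e. Then the law $\mu\otimes\mathcal C$ equals $\mu\otimes\mathcal C'$, since the integrals over $A$ ignore the null set. So the observational joint law cannot distinguish them. Realizing such a modified kernel by an actual SCM is not required for this statement, which concerns what the law determines. The two-model non-identifiability construction can be deferred to the later target-determination discussion.
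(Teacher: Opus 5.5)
Your proof is correct and follows the paper's route. The paper likewise combines the pointwise representation $X=\Phi(\Gamma(s_0,S),U_X)$ with the product law of $(\Gamma(s_0,S),U_X)$, pushes that law forward under $(\eta,u)\mapsto(\eta,\Phi(\eta,u))$, and invokes essential uniqueness of disintegrations on the standard Borel space $\R^N$; you additionally spell out the Tonelli computation on rectangles, the $\pi$--$\lambda$ extension, the countable-generator argument for a $B$-independent null set, and the determination and ``only up to'' remarks, which the paper leaves implicit.
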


\begin{proof}
By \cref{lem:gf-interface-representation}(i),
\begin{equation}
X=\Phi\bigl(\Gamma(s_0,S),U_X\bigr).
\label{eq:gf-observational-representation}
\end{equation}
By \eqref{eq:gf-pathway-exogeneity},
$\bigl(\Gamma(s_0,S),U_X\bigr)$ has product law
$\mu\times P_{U_X}$.
Push this law forward by
$(\eta,u)\mapsto\bigl(\eta,\Phi(\eta,u)\bigr)$.
Using \eqref{eq:gf-observational-representation},

$$
    P^{\mathcal M}_{\Gamma(s_0,S),\,X}
    =
    \bigl((\eta,u)\mapsto(\eta,\Phi(\eta,u))\bigr)_{\#}\bigl(\mu\times P_{U_X}\bigr)
    =
    \mu\otimes\mathcal C ,
$$

because at each fixed $\eta$,
$\bigl(\Phi(\eta,\cdot)\bigr)_{\#}P_{U_X}=Q_\eta$.
This proves \eqref{eq:gf-joint-factorization}.

Since $X$ takes values in the standard Borel space $\R^N$
(\cref{ass:gf-regularity}), disintegration over the first marginal $\mu$ is
unique up to $\mu$-null sets. Hence
\eqref{eq:gf-joint-factorization} is equivalent to
\eqref{eq:gf-observational-identification}.
\end{proof}

Thus observational data determine $\eta\mapsto Q_\eta$ for
$\mu$-almost every pathway value, but not at arbitrary individual points.

The remaining question is whether this almost-everywhere information
determines the response at the target $\Gamma_\star$.
This issue arises even when
$\Gamma_\star\in\mathcal G_{\mathrm{obs}}$: if
$\mu({\Gamma_\star})=0$, an almost-everywhere statement alone does not fix
the value at that point.
We address this with a target-determination assumption on the admissible class
of response maps.

\hfill 

\subsection{Admissible Models, Target Determination, and Identification}

\begin{assumption}[Target determination]
\label{ass:gf-determination}
Let $\mathfrak C$ denote the admissible class of downstream response maps, let
$\mu$ be the observed pathway distribution of \cref{def:gf-observed-forcing}, and let
$\Gamma_\star=\Gamma(s_0,\bar s)$ be the target pathway value. For any two maps
$\mathcal C,\widetilde{\mathcal C}\in\mathfrak C$,
\begin{equation}
Q_\eta=\widetilde Q_\eta
\quad
\text{for $\mu$-almost every }\eta
\qquad\Longrightarrow\qquad
Q_{\Gamma_\star}
=\widetilde Q_{\Gamma_\star}.
\label{eq:gf-determination}
\end{equation}
\end{assumption}

\Cref{ass:gf-determination} is a condition on the triple
$(\mathfrak C,\mu,\Gamma_\star)$, not on $\mathfrak C$ alone.
In the identification theorem below, the two models being compared have the
same pathway law $\mu$, so the condition is evaluated at a common $\mu$.

\hfill 

\begin{definition}[Admissible model]
\label{def:gf-admissible}
Fix the interface data $\bigl(D,\Gamma(s_0,\cdot)\bigr)$ and a class $\mathfrak C$ of
downstream response maps. An SCM $\mathcal M$ is \emph{admissible} if it satisfies
\cref{def:scm,ass:gf-regularity}, satisfies
\cref{ass:known-interface,ass:source-additivity,ass:gf-exogeneity} with that same $D$ and
that same $\Gamma(s_0,\cdot)$, and its downstream response map
$\mathcal C^{\mathcal M}$ of \cref{def:gf-response-map} belongs to $\mathfrak C$. We write
$\mathfrak A=\mathfrak A\bigl(D,\Gamma(s_0,\cdot),\mathfrak C\bigr)$ for the set of
admissible models.
\end{definition}

Admissible models share only the specified interface
$\bigl(D,\Gamma(s_0,\cdot)\bigr)$ and the requirement that their response maps
belong to $\mathfrak C$. They may otherwise have different exogenous spaces,
exogenous distributions, downstream graphs, and downstream mechanisms.

\hfill 

\begin{definition}[Identifiability of the target]
\label{def:gf-identifiable}
The target interventional distribution $P_X^{,\cdot,;do(S=\bar s)}$ is
\emph{identifiable from $\bigl(P_{S,X},\Gamma(s_0,\cdot)\bigr)$ within $\mathfrak A$} if for
every pair $\mathcal M,\widetilde{\mathcal M}\in\mathfrak A$,

$$
    P_{S,X}^{\mathcal M}=P_{S,X}^{\widetilde{\mathcal M}}
    \qquad\Longrightarrow\qquad
    P_X^{\mathcal M;do(S=\bar s)}=P_X^{\widetilde{\mathcal M};do(S=\bar s)} .
$$

\end{definition}

\hfill 

We can now combine the three steps.

\begin{theorem}[Identification of the target interventional distribution]
\label{thm:gf-identification}
Let $\mathfrak C$ be a class of downstream response maps, let $\bar s\in\mathcal S$ be an
intervention value, and let $\mathfrak A$ be the set of models admissible for
$\bigl(D,\Gamma(s_0,\cdot)\bigr)$ and $\mathfrak C$ (\cref{def:gf-admissible}).
Let $P_{S,X}$ be an observational distribution, and let
$\mu=\bigl(\Gamma(s_0,\cdot)\bigr)_{\#}P_S$ be the pathway law it induces. If
\cref{ass:gf-determination} holds for the triple $(\mathfrak C,\mu,\Gamma_\star)$, with
$\Gamma_\star=\Gamma(s_0,\bar s)$, then any two models in $\mathfrak A$ whose observational
distribution equals $P_{S,X}$ induce the same target interventional distribution.

If this holds for every observational distribution realizable within $\mathfrak A$, then the
target is identifiable from $\bigl(P_{S,X},\Gamma(s_0,\cdot)\bigr)$ within $\mathfrak A$ in
the sense of \cref{def:gf-identifiable}.
\end{theorem}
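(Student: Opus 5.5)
The argument chains the three steps of the roadmap, comparing two models $\mathcal M,\widetilde{\mathcal M}\in\mathfrak A$ that both have observational law $P_{S,X}$. Both models share the same known interface $\bigl(D,\Gamma(s_0,\cdot)\bigr)$, so the first step is to apply \cref{prop:gf-intervention-reduction} to each of them. This gives
$$P_X^{\mathcal M;do(S=\bar s)}=Q^{\mathcal M}_{\Gamma_\star}\quad\text{and}\quad P_X^{\widetilde{\mathcal M};do(S=\bar s)}=Q^{\widetilde{\mathcal M}}_{\Gamma_\star}.$$
Here $\Gamma_\star=\Gamma(s_0,\bar s)$ is the same point for both models, because $\Gamma(s_0,\cdot)$ is common data. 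It therefore suffices to show $Q^{\mathcal M}_{\Gamma_\star}=Q^{\widetilde{\mathcal M}}_{\Gamma_\star}$. Note that the two models may live on different exogenous spaces. This causes no difficulty, since every object compared below is a law on $\mathcal G$, on $\R^N$, or on $\mathcal G\times\R^N$.

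Second, I will show that the two response maps agree $\mu$-almost everywhere. The pathway law is the same in both models: it is $\mu=(\Gamma(s_0,\cdot))_\#P_S$, and both models have $S$-marginal $P_S$. The joint law of $(\Gamma(s_0,S),X)$ is the pushforward of $P_{S,X}$ under the measurable map $(s,x)\mapsto(\Gamma(s_0,s),x)$, so it also coincides across the two models. \Cref{lem:gf-observational-recovery} then gives
$$\mu\otimes\mathcal C^{\mathcal M}=P_{\Gamma(s_0,S),X}=\mu\otimes\mathcal C^{\widetilde{\mathcal M}}.$$
Disintegrations on the standard Borel space $\R^N$ are unique up to $\mu$-null sets (\cref{ass:gf-regularity}). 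Hence $Q^{\mathcal M}_\eta=Q^{\widetilde{\mathcal M}}_\eta$ for $\mu$-a.e.\ $\eta$.

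Third, both response maps lie in $\mathfrak C$ by admissibility, and the common pathway law is exactly the $\mu$ in the triple $(\mathfrak C,\mu,\Gamma_\star)$. \Cref{ass:gf-determination} therefore upgrades the almost-everywhere agreement to $Q^{\mathcal M}_{\Gamma_\star}=Q^{\widetilde{\mathcal M}}_{\Gamma_\star}$. Combined with the first step, this proves the first claim of the theorem.

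For the second claim, take any $\mathcal M,\widetilde{\mathcal M}\in\mathfrak A$ with $P^{\mathcal M}_{S,X}=P^{\widetilde{\mathcal M}}_{S,X}$, and call this common law $P_{S,X}$. It is realizable within $\mathfrak A$, so by hypothesis the determination assumption holds for its induced $\mu$. The first claim then applies and yields \cref{def:gf-identifiable}.

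The main obstacle is bookkeeping rather than depth. One must check that $\mu$ really is common to the two models, since \cref{ass:gf-determination} is stated relative to a fixed $\mu$. One must also be careful that the almost-everywhere statement is about Markov kernels defined pointwise everywhere, namely $\eta\mapsto Q_\eta$ from \cref{def:gf-response-map}, rather than about equivalence classes. The target-determination assumption is exactly what bridges these two: it turns agreement on a $\mu$-full set into agreement at the single point $\Gamma_\star$, even when $\Gamma_\star$ lies outside $\supp(\mu)$.
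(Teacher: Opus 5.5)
Your proposal is correct and follows essentially the same route as the paper. You push the common $P_{S,X}$ forward to get a shared joint law of $(\Gamma(s_0,S),X)$ and a shared $\mu$, use \cref{lem:gf-observational-recovery} with a.e.\ uniqueness of disintegration, upgrade via \cref{ass:gf-determination}, and conclude with \cref{prop:gf-intervention-reduction}; the only difference is that you invoke the intervention reduction first rather than last, which changes nothing.
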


\begin{proof}
Let $\mathcal M,\widetilde{\mathcal M}\in\mathfrak A$ have the same
observational distribution $P_{S,X}$, and let
$\mathcal C,\widetilde{\mathcal C}\in\mathfrak C$ be their downstream response
maps.

\emph{Step 1: the two models induce the same joint law of
$\bigl(\Gamma(s_0,S),X\bigr)$.}
Because both models use the same measurable interface
$\Gamma(s_0,\cdot)$, pushing their common observational distribution
$P_{S,X}$ through
$(s,x)\mapsto\bigl(\Gamma(s_0,s),x\bigr)$ gives
\begin{equation}
P^{\mathcal M}_{\Gamma(s_0,S),,X}
=P^{\widetilde{\mathcal M}}_{\Gamma(s_0,S),,X}
=:\nu .
\label{eq:gf-proof-step0}
\end{equation}
Taking first marginals shows that both models also have the same pathway law,
$\mu^{\mathcal M}=\mu^{\widetilde{\mathcal M}}=:\mu$.
This is the $\mu$ used in \cref{ass:gf-determination}.

\emph{Step 2: the two response maps agree $\mu$-almost everywhere.}
Apply \cref{lem:gf-observational-recovery} to each model. Together with
\eqref{eq:gf-proof-step0}, this gives

$$
    \mu\otimes\mathcal C=\nu=\mu\otimes\widetilde{\mathcal C}.
$$

Because $X$ takes values in the standard Borel space $\R^N$, the
disintegration of $\nu$ over $\mu$ is unique up to $\mu$-null sets. Therefore
\begin{equation}
Q_\eta
=\widetilde Q_\eta
\qquad
\text{for $\mu$-almost every }\eta.
\label{eq:gf-proof-step1}
\end{equation}

\emph{Step 3: the target response is determined.}
Since $\mathcal C,\widetilde{\mathcal C}\in\mathfrak C$,
\cref{ass:gf-determination} and \eqref{eq:gf-proof-step1} imply
\begin{equation}
Q_{\Gamma_\star}
=\widetilde Q_{\Gamma_\star}.
\label{eq:gf-proof-step2}
\end{equation}

\emph{Conclusion.}
By \cref{prop:gf-intervention-reduction},
\begin{equation}
P_X^{\mathcal M;do(S=\bar s)}
=Q_{\Gamma_\star},
\qquad
P_X^{\widetilde{\mathcal M};do(S=\bar s)}
=\widetilde Q_{\Gamma_\star}.
\end{equation}
Combining this with \eqref{eq:gf-proof-step2} shows that the two models have
the same target interventional distribution.

If the target-determination condition holds for the pathway law induced by
every realizable $P_{S,X}$, then every pair of admissible models with the same
observational distribution agrees on the target. This is exactly the
identifiability condition in \cref{def:gf-identifiable}.
\end{proof}

\paragraph{Where the extrapolation content sits.}
Steps 1 and 2 determine the response map only up to $\mu$-null sets.
The passage from this almost-everywhere information to the value at the single
target point $\Gamma_\star$ comes entirely from
\cref{ass:gf-determination}.
There are three relevant regimes.\\

\begin{proposition}[Positive-mass targets]
\label{prop:gf-atom}
If $\mu({\Gamma_\star})>0$, then \cref{ass:gf-determination} holds at $\Gamma_\star$ for
\emph{every} class $\mathfrak C$.
\end{proposition}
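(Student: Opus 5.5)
The plan is to verify the implication in \eqref{eq:gf-determination} directly, with no use of the structure of $\mathfrak C$ beyond the fact that its members are maps $\eta\mapsto Q_\eta$ into probability measures on $\R^N$. Fix any class $\mathfrak C$ and any two maps $\mathcal C,\widetilde{\mathcal C}\in\mathfrak C$. Assume $Q_\eta=\widetilde Q_\eta$ for $\mu$-almost every $\eta$. We must show $Q_{\Gamma_\star}=\widetilde Q_{\Gamma_\star}$.

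First we introduce the exceptional set
$$
N:=\{\eta\in\mathcal G:\ Q_\eta\neq\widetilde Q_\eta\}.
$$
The hypothesis says that $N$ is contained in a $\mu$-null set. Strictly, the almost-everywhere statement means there is a measurable $N'\supseteq N$ with $\mu(N')=0$. Using $N'$ avoids any need to argue that $N$ itself is measurable. If one wants $N$ measurable directly, it can be written as a countable union over a countable generating $\pi$-system of Borel sets $B$ of the sets $\{\eta: Q_\eta(B)\neq\widetilde Q_\eta(B)\}$. Each of these is measurable because both maps are Markov kernels (\cref{def:gf-response-map}). Equality on a generating $\pi$-system suffices for equality of measures on the standard Borel space $\R^N$.

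The key step is a one-line contradiction. Suppose $Q_{\Gamma_\star}\neq\widetilde Q_{\Gamma_\star}$. Then $\Gamma_\star\in N\subseteq N'$, so by monotonicity
$$
0<\mu(\{\Gamma_\star\})\le\mu(N')=0,
$$
which is impossible. Here $\{\Gamma_\star\}$ is measurable because singletons are Borel in $\mathcal G=\R^{|D|}$. Hence $Q_{\Gamma_\star}=\widetilde Q_{\Gamma_\star}$. Since $\mathcal C,\widetilde{\mathcal C}$ and $\mathfrak C$ were arbitrary, \cref{ass:gf-determination} holds at $\Gamma_\star$ for every class.

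I do not expect a real obstacle. The only point needing care is the measure-theoretic reading of ``$\mu$-almost every'', that is, whether the disagreement set is measurable or merely contained in a null set. I would handle this through the null superset $N'$ as above, so the argument is independent of measurability of $N$.

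It is also worth remarking, consistent with \cref{lem:gf-observational-recovery}, what this gives concretely. In this regime $Q_{\Gamma_\star}$ equals the elementary conditional law $P(X\in\cdot\mid\Gamma(s_0,S)=\Gamma_\star)$, obtained by dividing by $\mu(\{\Gamma_\star\})>0$. This gives an explicit identification formula.
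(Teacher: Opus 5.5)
Your proof is correct and matches the paper's argument: the disagreement set lies inside a measurable $\mu$-null set, so disagreement at $\Gamma_\star$ would give $0<\mu(\{\Gamma_\star\})\le 0$. Your side remarks, on measurability of the disagreement set via a countable generating $\pi$-system and on $Q_{\Gamma_\star}$ being the elementary conditional law, are correct but not needed for the proposition.
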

\begin{proof}
Suppose $Q_\eta=\widetilde Q_\eta$ for $\mu$-almost every $\eta$.
Then the disagreement set
$A:={\eta: Q_\eta\neq\widetilde Q_\eta}$
is contained in a $\mu$-null measurable set $N$.
If $Q_{\Gamma_\star}\neq\widetilde Q_{\Gamma_\star}$, then
$\Gamma_\star\in A\subseteq N$, so
$\mu({\Gamma_\star})\le\mu(N)=0$, contradicting the assumption.
\end{proof}

\hfill 

\begin{remark}[Three regimes]
\label{rem:gf-determination-regimes}
We consider the three following regimes:
\begin{enumerate}[left=1.5em]
\item \emph{Positive-mass target.}
If $\mu({\Gamma_\star})>0$, then
\Cref{prop:gf-atom} makes \cref{ass:gf-determination} automatic for every
$\mathfrak C$. The target itself occurs with positive observational
probability.

\item \emph{In-support, $\mu$-null target.}
If $\Gamma_\star\in\mathcal G_{\mathrm{obs}}$ but
$\mu(\{\Gamma_\star\})=0$, as is typical when $\mu$ is nonatomic, then
\eqref{eq:gf-proof-step1} does not determine $Q_{\Gamma_\star}$ by itself.
A regularity restriction on $\mathfrak C$ is needed.

Continuity is sufficient. Suppose every
$\eta\mapsto Q_\eta$ in $\mathfrak C$ is continuous under any Hausdorff
topology on laws over $\R^N$ that separates distinct laws, such as the
topology of weak convergence. If $Q$ and $\widetilde Q$ differed at
$\Gamma_\star$, disjoint neighborhoods of their two values would pull back
to an \emph{open} neighborhood of $\Gamma_\star$ on which
$Q\neq\widetilde Q$. Because
$\Gamma_\star\in\supp(\mu)$, this neighborhood has positive
$\mu$-measure, contradicting \eqref{eq:gf-proof-step1}.
Other restrictions on $\mathfrak C$ may also give target determination.

\item \emph{Out-of-support target.}
If $\Gamma_\star\notin\mathcal G_{\mathrm{obs}}$, then
\cref{ass:gf-determination} is an extrapolation assumption.
The observational distribution alone cannot verify it; it must follow from
the model class $\mathfrak C$ or from additional mechanistic structure.

\end{enumerate}
\end{remark}

\paragraph{Connection to learning.}
The identification result gives a direct learning problem.
Observational samples provide pairs
$$
    \bigl(\Gamma(s_0,S_r),X_r\bigr),
$$
from which a model learns the downstream response
$\eta\mapsto Q_\eta$.
The counterfactual prediction is then obtained by evaluating this response at $\Gamma_\star=\Gamma(s_0,\bar s).$ These observations are ordinary factual pairs, not unit-level counterfactual
pairs. By \cref{lem:gf-observational-recovery}, they determine the response map
only $\mu$-almost everywhere. Generalization to $\Gamma_\star$ therefore comes
from the structural restrictions encoded by the admissible class
$\mathfrak C$, exactly as formalized by \cref{ass:gf-determination} and the
three regimes in \cref{rem:gf-determination-regimes}.

\subsection{Lean Verification}
\label{app:lean-verification}

We formalized and machine-checked the core identification argument in Lean~4 using \texttt{mathlib} and \emph{Causalean}, the causal-inference
library distributed with the CausalSmith project
\citep{causalsmith}.
The development comprises roughly nine hundred lines of Lean across four modules. It covers the structural reduction from an intervention on the source variable to a forced pathway value, observational recovery of the downstream response under source exogeneity, and the final target-identification theorem.

\paragraph{Correspondence with the mathematical development.}
The Lean development follows the mathematical structure above, but the relevant objects are constructed explicitly rather than introduced only through notation. In particular, the recursive downstream solution, the forced system, and the induced response kernels are all defined as Lean objects and their required measurability properties are proved.

\begin{center}
\small
\setlength{\tabcolsep}{4pt}
\renewcommand{\arraystretch}{1.08}

\begin{tabularx}{\linewidth}{@{}p{0.365\linewidth}X@{}}
\toprule
\rowcolor{gray!10}
\textbf{Mathematical object or result}
&
\textbf{Lean counterpart}
\\
\midrule

\raggedright SCM structure and realized probability model
&
\texttt{Mechanisms}, \texttt{Realized}
\\

\raggedright Recursive solution and \cref{lem:gf-wellposed}
&
\texttt{Mechanisms.solveOffset}, \, 
\texttt{solveOffset\_\allowbreak spec}, \, 
\texttt{eq\_\allowbreak solveOffset}, \, 
\texttt{measurable\_\allowbreak solveOffset}
\\

\raggedright Known source interface, \cref{ass:known-interface}
&
\texttt{Interface}
\\

\raggedright Source-additive structure, \cref{ass:source-additivity}
&
encoded in \texttt{Mechanisms.solve\_\allowbreak spec}
\\

\raggedright Source intervention, \cref{def:source-intervention}
&
\texttt{Realized.doSource},
\texttt{Realized.doLaw}
\\

\raggedright Forced downstream system, \cref{def:gf-forced-system}
&
\texttt{Mechanisms.forced}
\\

\raggedright Downstream response map, \cref{def:gf-response-map}
&
\texttt{Realized.response}
\\

\raggedright Interface representation,
\cref{lem:gf-interface-representation}
&
\texttt{Mechanisms.solve\_\allowbreak eq\_\allowbreak forced\_\allowbreak pathway}
\\

\raggedright Intervention reduction,
\cref{prop:gf-intervention-reduction}
&
\texttt{Realized.doLaw\_\allowbreak eq\_\allowbreak response\_\allowbreak pathway}
\\

\raggedright Source exogeneity, \cref{ass:gf-exogeneity}
&
\texttt{Realized.exog}
\\

\raggedright Observational recovery,
\cref{lem:gf-observational-recovery}
&
\texttt{Realized.pathwayJoint\_\allowbreak eq\_\allowbreak compProd},
\texttt{Realized.condDistrib\_\allowbreak eq\_\allowbreak response}
\\

\raggedright Target determination, \cref{ass:gf-determination}
&
\texttt{TargetDetermines}
\\

\raggedright Identification theorem, \cref{thm:gf-identification}
&
\texttt{reroute\_\allowbreak identification}
\\

\raggedright Positive-mass target, \cref{prop:gf-atom}
&
{\footnotesize
\texttt{targetDetermines\_of\_measure\_singleton\_ne\_zero}}
\\

\raggedright Continuous in-support target,
\cref{rem:gf-determination-regimes}(ii)
&
{\footnotesize
\texttt{targetDetermines\_of\_continuous\_of\_mem\_support}}
\\

\bottomrule
\end{tabularx}
\end{center}

\paragraph{Role of the formalization.}
The structural part of the argument was implemented directly. In particular, the forced downstream system is defined recursively, with existence, uniqueness, and measurability established explicitly. This is important because replacing the source contribution by a pathway value does not by itself define a measurable downstream map. The resulting construction makes \cref{prop:gf-intervention-reduction} a theorem about the SCM rather than a notational substitution.

For the measure-theoretic steps, we reuse existing results from
\texttt{mathlib} and Causalean. Under source exogeneity, the joint law of the observed pathway value and downstream state factorizes through the response kernel, yielding
\cref{lem:gf-observational-recovery}. The main identification proof then uses almost-everywhere uniqueness of conditional kernels to show that two observationally equivalent admissible models have downstream response maps that agree $\mu$-almost everywhere. The target-determination assumption upgrades this agreement to the particular target value $\Gamma_\star$, after which \cref{prop:gf-intervention-reduction} gives equality of the target interventional distributions.

\clearpage

\section{Advection–Diffusion Experiment}
\label{app:advection_diffusion}
This section includes additional details regarding the controlled Advection–Diffusion experiments.

\subsection{Dataset and Pretrained FNO Backbone}
\label{app:advection_data}

We use a two-dimensional advection--diffusion system to demonstrate that \emph{ReRoute} extends beyond the climate setting. This isolates two generalizations: the pathway depends on the evolving state, and the targeted input can contain either two controls jointly.  Unlike the climate application, both the exact dynamics and exact pathway are available here. So, this experiment tests the \emph{ReRoute} construction directly against known ground truth. 

\paragraph{Dataset.} 

We consider a scalar field $u(x,y,t)$ on the periodic domain $[0,2\pi)^2$ following
$$
\frac{\partial u}{\partial t}
+
c\frac{\partial u}{\partial x}
=
D\nabla^2u,
$$
where $c$ is the advection speed in the $x$ direction and $D$ is the diffusivity. 

We discretize the field on a $64\times64$ grid and evaluate the exact Fourier solution every $h=0.01$ time units, retaining 101 states corresponding to 100 transitions. Initial conditions are smooth, zero-mean, unit-RMS Gaussian random fields containing modes through wavenumber six.

The training distribution contains strongly correlated combinations of $c$ and $D$. For each initial condition, we draw 16 stratified values $q\in[-1,1]$ and independent noise $\epsilon\sim\mathcal{U}(-0.15,0.15)$. We then set $c=1+0.5q$ and $D=0.010+0.008\,\operatorname{clip}(q+\epsilon,-1,1)$. Each initial condition receives a newly sampled set of parameter pairs, producing 512 unique training trajectories across 32 initial conditions, with parameter correlation $r=0.989$. The validation and test sets each contain eight independent initial conditions. The sampled training parameters are visualized in~\cref{fig:four_grids}.

\paragraph{Pretraining.}

We pretrain a Fourier neural operator (FNO) \citep{li2020fourier} from scratch on these correlated trajectories and use it as the baseline throughout the experiment. The FNO is a prominent example of a neural operator, a class of models that learn mappings between function spaces and are therefore particularly well suited for approximating PDE solution operators~\citep{kovachki_neural_2023,berner2026principled}. The FNO architecture composes integral operator layers parameterized in Fourier space~\citep{duruisseaux2025FNOGuide}, enabling efficient global interactions through fast Fourier transforms on uniform grids. We use the standard FNO implementation provided in the \texttt{neuraloperator} library~\citep{kossaifi2024neural}.

In our experiments, the FNO receives the current scalar field $u_t$, together with $c$ and $D$, each repeated across the spatial grid as an additional input channel, and predicts a residual update around the current state. The model has four FNO blocks, 64 hidden channels, and 16 Fourier modes along each spatial dimension. We pretrain the backbone for 30 epochs using a 10-step autoregressive factual objective. We evaluate the models over 100-step rollouts, ten times the optimization horizon. On the 128 held-out in-distribution test trajectories, the pretrained FNO has mean factual RMSE $0.00184$ after 10 steps and $0.01448$ after 100 steps.

\subsection{State-Dependent Pathway}
\label{app:advection_pathway}

\paragraph{Exact state-dependent pathway.}

For this system, the exact one-step update is known. Advection translates the field by a distance $ch$ in the $x$ direction, while diffusion progressively smooths its smaller-scale structure. In Fourier space, these effects correspond to a phase shift and wavenumber-dependent damping:
$$
\mathcal{F}[\Phi^h_{c,D}(u_t)](\mathbf{k})
=
\exp\left[
-hD|\mathbf{k}|^2
-ihck_x
\right] \,
\mathcal{F}[u_t](\mathbf{k}),
$$
where $\Phi^h_{c,D}(u_t)$ denotes the field given by advancing $u_t$ by one interval $h$ with parameters $c$ and $D$, and $\mathcal{F}$ denotes the Fourier transform.

We choose $c_0=1$ and $D_0=0.010$, the centers of their respective training ranges, as the reference values and define the pathway as
$$
\Gamma(u_t;c,D)
=
\Phi^h_{c,D}(u_t)
-
\Phi^h_{c_0,D_0}(u_t).
$$

The pathway therefore gives the exact change in the next state caused by moving from the reference parameters to the requested parameters. Note that this is state-dependent, as the change is recomputed from the current predicted field at every rollout step. When rerouting both controls, we clamp both $c$ and $D$ to their reference values in the backbone and introduce their joint effect through the pathway:
$$
\widehat{u}_{t+1}
=
G_\theta(u_t,c_0,D_0)
+
\Gamma(u_t;c,D).
$$
We also evaluate a one-variable version that fixes and reroutes only $D$, while the backbone continues to receive the requested value of $c$:
$$
\widehat{u}_{t+1}
=
G_\theta(u_t,c,D_0)
+
\Phi^h_{c,D}(u_t)
-
\Phi^h_{c,D_0}(u_t).
$$

Since the exact update is available, the pathway itself introduces no approximation error when rerouting either a single control or the joint target $(c,D)$, allowing us to isolate whether the fine-tuned backbone uses this pathway effectively. \\

\subsection{Evaluation and Additional Results}
\label{app:advection_evaluation}

We evaluate the models on a $41\times41$ grid spanning $c\in[0.5,1.5]$ and $D\in[0.002,0.018]$, breaking the strong dependence present during training. For each of the eight held-out initial conditions, the exact solution gives trajectories under both the reference and intervention parameters. For the rollout curve, we average over the 380 grid points satisfying $|\tilde c-\tilde D|/\sqrt{2}\geq0.75$, where $\tilde c=(c-1)/0.5$ and $\tilde D=(D-0.010)/0.008$ are normalized coordinates. No trajectories from this grid are used for training or fine-tuning. \\

Our primary metric is the counterfactual-response RMSE,
$$
\mathcal{E}_{\mathrm{cf}}(t;c,D)
=
\operatorname{RMSE}
\left[
\left(
\widehat{u}^{\,c,D}_t
-
\widehat{u}^{\,c_0,D_0}_t
\right)
-
\left(
u^{c,D}_t
-
u^{c_0,D_0}_t
\right)
\right].
$$
This removes errors shared by the factual and counterfactual trajectories to directly measures whether each model predicts the change caused by the intervention.\\

\paragraph{Results.}

\Cref{fig:four_grids} evaluates whether the models can separate $c$ and $D$ after observing them only in strongly correlated pairs. The baseline error increases away from the narrow training support. Rerouting $D$ reduces this error even though $c$ continues to enter through the backbone, while jointly rerouting $(c,D)$ gives the lowest error across most of the grid.

\Cref{fig:three_fields} shows the spatial response residual for a representative intervention far from the training distribution. The baseline produces large, spatially coherent errors. Some residual structure remains under both \emph{ReRoute} variants, but its magnitude is more than an order of magnitude smaller. So, the aggregate improvements comes from substantially weaker field-level errors rather than cancellation across space. 

Finally, \cref{fig:rollout_error} follows the three methods over a 100-step autoregressive rollout. Both \emph{ReRoute} variants maintain substantially lower error at every evaluated step, with error growing most slowly when $(c,D)$ are rerouted jointly. The improvement therefore extends well beyond the ten-step horizon used during fine-tuning.

The state-dependent pathways and joint rerouting of multiple controls lie outside the current formalization of \emph{ReRoute}, so the theoretical guarantees do not directly hold here. Nevertheless, both extensions work in practice here, and jointly rerouting $(c,D)$ performs best because $c$ translates spatial structure while $D$ damps its smaller scales. Rerouting only $D$ still leaves the baseline to extrapolate the advection response from strongly correlated training data.

\begin{figure}[tbp] \vspace{-1.4mm}
    \centering
    \includegraphics[width=0.92\linewidth]{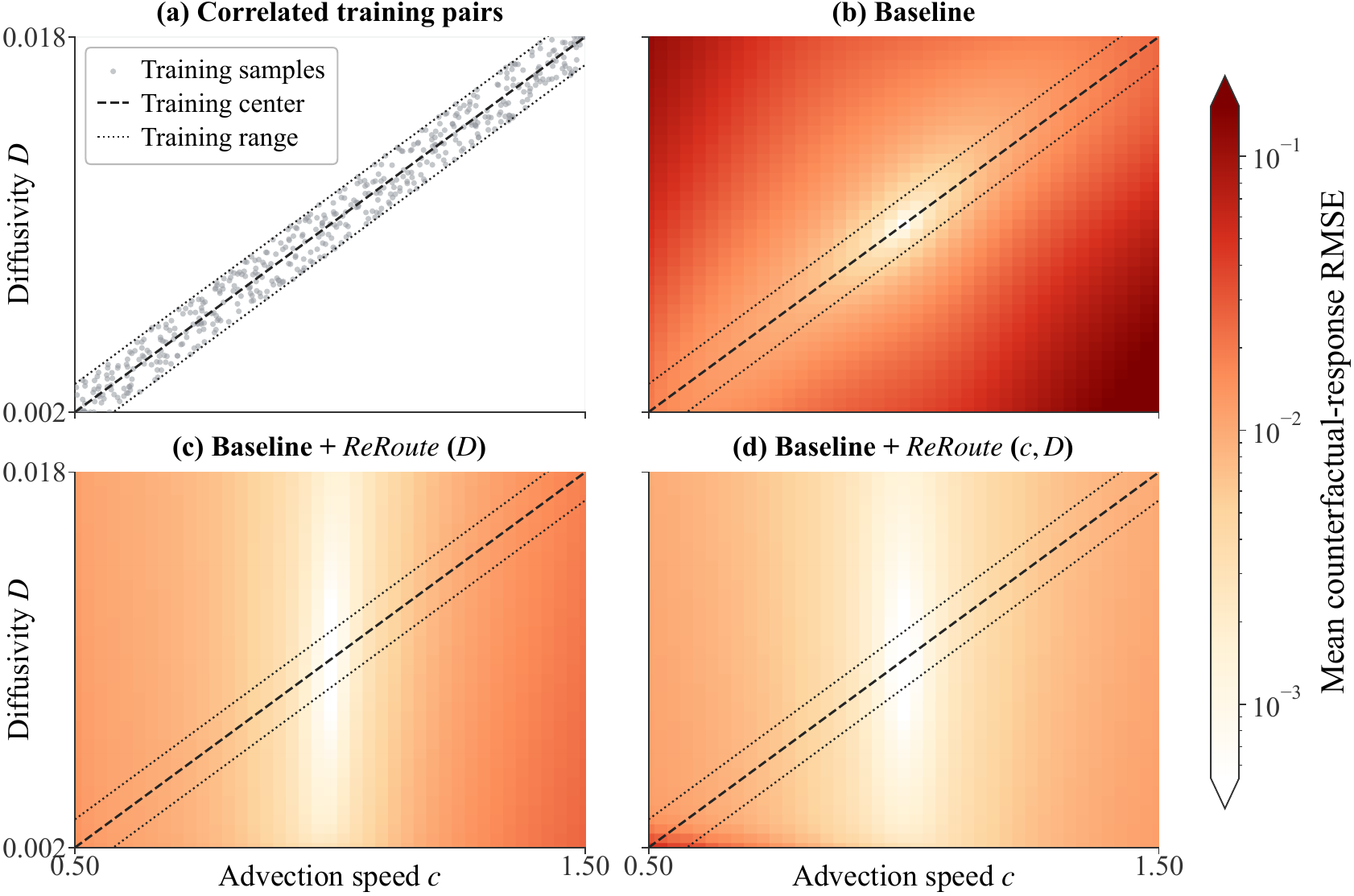} \vspace{-2.5mm}
    \caption{
    \textbf{\emph{ReRoute} separates strongly correlated advection and diffusion controls across the full parameter grid.}
    The left panel shows the 512 parameter pairs used for training; dashed and dotted lines denote their center and range. The remaining panels show mean counterfactual-response RMSE after 100 rollout steps for the pretrained baseline, \emph{ReRoute} applied only to diffusivity $D$, and \emph{ReRoute} applied jointly to $(c,D)$. Errors are averaged across eight held-out initial conditions. \emph{ReRoute} reduces error away from the training support, with the joint pathway giving the lowest errors.
    }
    \label{fig:four_grids} \vspace{-4mm}
\end{figure}

\begin{figure}[tbp]
    \centering
    \includegraphics[width=0.97\linewidth]{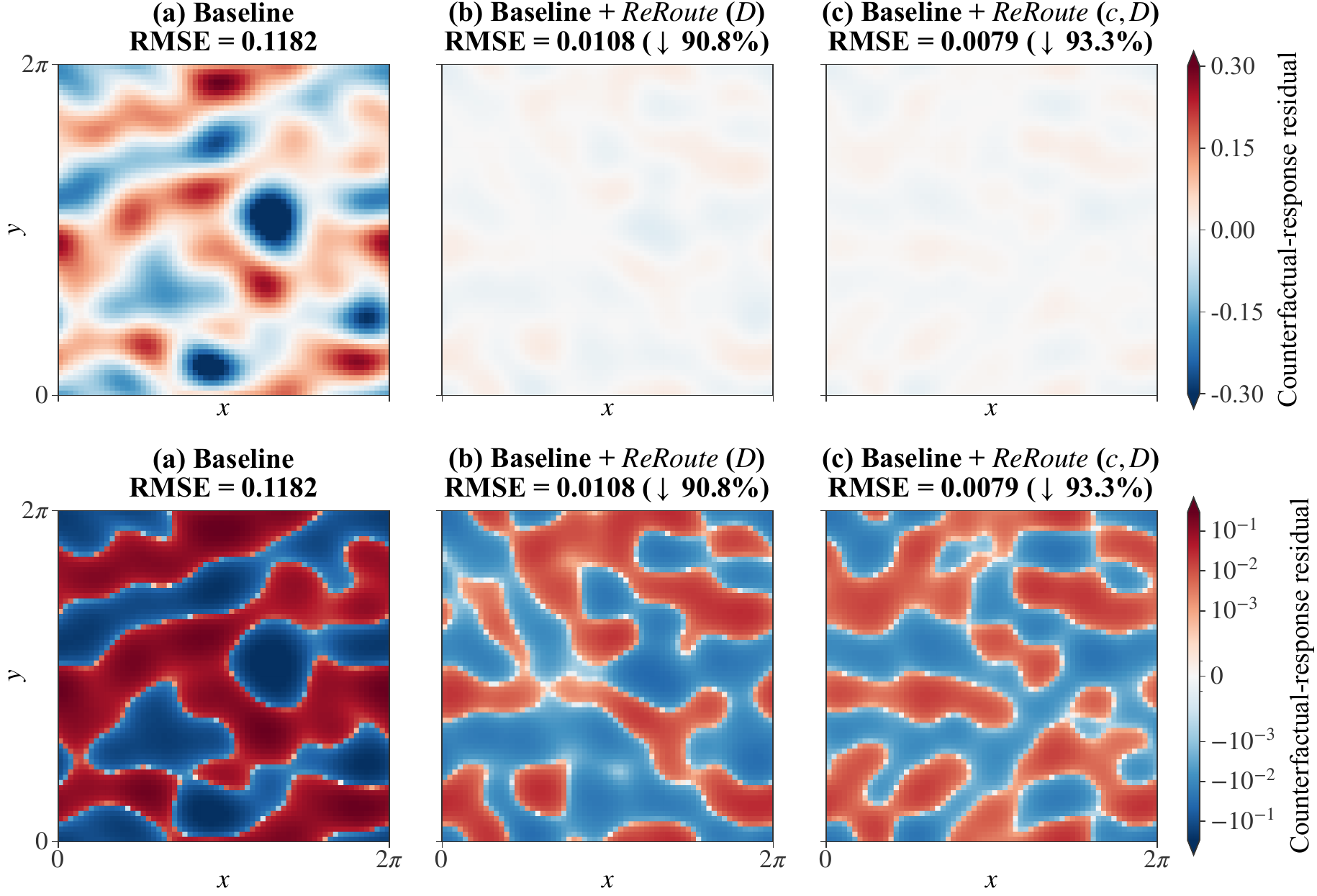} \vspace{-4mm}
    \caption{
    \textbf{\emph{ReRoute} removes most of the structured counterfactual-response error for a representative held-out field.}
    Shown is the signed response residual (\textbf{in linear and log scales}) after 100 rollout steps for one held-out initial condition at $c=0.5$ and $D=0.018$, a parameter combination far from the correlated training support. The baseline has RMSE $0.1182$. Rerouting $D$ alone reduces this to $0.0108$, while jointly rerouting $(c,D)$ reduces it to $0.0079$. 
    }
    \label{fig:three_fields} \vspace{-1.4mm}
\end{figure}

\clearpage

\section{Climate Data and Pretrained Models}
\label{app:data}

We use two pretrained ACE2-family climate emulators in the same order in \cref{sec:experiments}. The controlled climate benchmark uses ACE2S-SHiELD+$_{\mathrm{no\text{-}RC}}$
\citep{clark2026disentangling}, trained on physics-based SHiELD simulations without random-CO$_2$ data. The reanalysis experiment uses ACE2-ERA5 \citep{watt2025ace2}, trained directly on historical ERA5 reanalysis. Both models evolve the atmosphere autoregressively at six-hour resolution on a $1^\circ$ Gaussian grid with eight atmospheric
layers. This section summarizes the training data and variable interface of the two pretrained backbones; intervention and rollout protocols are given in~\cref{app:experiment_details}.

\hfill

\subsection{ACE2S-SHiELD+$_{\mathrm{no\text{-}RC}}$: Controlled Climate Training Data}
\label{app:data_shield}

For the controlled climate experiments, we use the released
ACE2S-SHiELD+$_{\mathrm{no\text{-}RC}}$ checkpoint from
\citet{clark2026disentangling}. The underlying atmospheric simulations
are generated with the GFDL SHiELD model at C96 resolution
(approximately 100~km) with 79 finite-volume hybrid
sigma--pressure levels. SHiELD output is regridded to a
$1^\circ$ Gaussian grid, spherical-harmonic round-trip filtered, and
vertically coarsened to eight atmospheric layers before being used by
ACE2S.

Crucially, the no-RC checkpoint is trained with \emph{no random-CO$_2$
simulations}. Its 90-year training set contains 45 years of AMIP
simulations and 45 years of equilibrium-climate simulations, as
summarized in~\cref{tab:norc_training_data}. The AMIP simulations use
prescribed historical SST, sea ice, and CO$_2$. The equilibrium data
span 1$\times$, 2$\times$, and 4$\times$ the reference CO$_2$
concentration. Although the equilibrium reference simulations originate
from SHiELD coupled to a slab ocean model, SST is prescribed when these
data are used for ACE2S training. Thus, the model is exposed to several
equilibrium CO$_2$ climates but receives no training trajectories in
which CO$_2$ is varied independently of the corresponding climate state.

\hfill 

\begin{table*}[h]
\centering
\caption{
Training and validation data for
ACE2S-SHiELD+$_{\mathrm{no\text{-}RC}}$.
Time bounds are inclusive; IC denotes initial-condition ensemble member.
}
\label{tab:norc_training_data}

\small
\setlength{\tabcolsep}{6pt}
\renewcommand{\arraystretch}{1.15}

\begin{tabularx}{0.94\textwidth}{@{}p{0.11\textwidth}Xr@{}}
\toprule
\rowcolor{gray!10}
\textbf{Split} & \textbf{Simulation samples} & \textbf{Years} \\
\midrule

\multicolumn{3}{@{}l}{\textit{Training}}\\
\addlinespace[2pt]

AMIP
&
IC1: 1979--2008;
IC2: 1981-07--1986-06,
1991-07--1996-06,
2001-07--2006-06
&
45
\\

Equilibrium
&
\begin{tabular}[t]{@{}l@{\qquad}l@{\qquad}l@{}}
1$\times$CO$_2$: & IC1: 2031--2040, & IC2: 2032--2036 \\
2$\times$CO$_2$: & IC1: 2031--2040, & IC2: 2032--2036 \\
4$\times$CO$_2$: & IC1: 2031--2040, & IC2: 2032--2036
\end{tabular}
&
45
\\

\midrule

\multicolumn{3}{@{}l}{\textit{Validation}}\\
\addlinespace[2pt]

AMIP
&
IC1: 2009--2011
&
3
\\

Equilibrium
&
IC2: 2038 at each of
1$\times$, 2$\times$, and 4$\times$CO$_2$
&
3
\\

\bottomrule
\end{tabularx}
\end{table*}

ACE2S-SHiELD+ uses essentially the same prognostic, forcing, and
diagnostic interface as ACE2-ERA5, summarized in
\cref{tab:ace2_variables}. There are two relevant differences.
First, ACE2S-SHiELD+ additionally predicts total frozen precipitation
rate as an output-only diagnostic, which enters the atmospheric energy
budget. Second, CO$_2$ is supplied as a next-timestep forcing, so the
specified concentration corresponds to the six-hour interval over which
the state update and associated boundary-energy fluxes are computed.\\

\subsection{ACE2-ERA5: Historical Reanalysis Training Data}
\label{app:data_era5}

For the real-world reanalysis experiment, we use ACE2-ERA5
\citep{watt2025ace2}, trained directly on ERA5
\citep{hersbach2020era5}. The processed dataset spans 1940--2022.
Training uses 1940--1995, 2011--2019, and 2021--2022;
1996--2000 is used for validation; and 2001--2010 and 2020 are held
out for testing. These are the splits used by the released ACE2-ERA5
checkpoint.

ERA5 is natively represented on 137 model levels. Following
\citet{watt2025ace2}, the fields are horizontally regridded to a
$1^\circ$ Gaussian grid and vertically coarsened to eight
terrain-following hybrid sigma--pressure layers. Air temperature,
specific total water, and horizontal winds are represented on each
vertical layer; the remaining state, forcing, and diagnostic fields are
two-dimensional. The temporal resolution is six hours. State variables
are represented as instantaneous snapshots, while radiative, turbulent,
and precipitation fluxes are means over the six-hour interval.

\Cref{tab:ace2_variables} lists the ACE2-ERA5 variable interface.
Prognostic variables are supplied as inputs and predicted at the next
timestep, forcing variables are input only, and diagnostic variables are
predicted outputs that are not autoregressively fed back into the state.
Surface skin temperature has different roles by surface type: over land
and sea ice it is prognostic, whereas over open ocean it is prescribed
and corresponds to sea-surface temperature (SST).

\begin{table*}[t]
\centering
\caption{
ACE2-ERA5 variable interface, which is also shared by
ACE2S-SHiELD+ except for the differences described in
\cref{app:data_shield}. For vertically resolved variables,
$k=0,\ldots,7$ indexes the eight atmospheric layers from the top of the
atmosphere toward the surface. ``Snapshot'' denotes an instantaneous
field, ``Mean'' a mean over the six-hour timestep, and ``Invariant'' a
time-independent field. TOA denotes top of atmosphere.
}
\label{tab:ace2_variables}
\small
\setlength{\tabcolsep}{8pt}
\renewcommand{\arraystretch}{1.1}
\begin{tabular}{@{}llll@{}}
\toprule
\rowcolor{gray!10}
\textbf{Symbol} & \textbf{Description} & \textbf{Units} & \textbf{Time} \\
\midrule

\multicolumn{4}{@{}l}{\textit{Prognostic variables (input and output)}}\\
\midrule

$T_k$              & Air temperature                         & K                         & Snapshot \\
$q^T_k$            & Specific total water (vapor + condensates) & kg\,kg$^{-1}$          & Snapshot \\
$u_k$              & Eastward wind                           & m\,s$^{-1}$               & Snapshot \\
$v_k$              & Northward wind                          & m\,s$^{-1}$               & Snapshot \\
$T_s$              & Skin temperature over land or sea ice   & K                          & Snapshot \\
$p_s$              & Surface pressure                        & Pa                         & Snapshot \\
$T_{\mathrm{2m}}$  & 2-m air temperature                     & K                          & Snapshot \\
$q_{\mathrm{2m}}$  & 2-m specific humidity                   & kg\,kg$^{-1}$              & Snapshot \\
$u_{\mathrm{10m}}$ & 10-m eastward wind                      & m\,s$^{-1}$                & Snapshot \\
$v_{\mathrm{10m}}$ & 10-m northward wind                     & m\,s$^{-1}$                & Snapshot \\

\midrule
\multicolumn{4}{@{}l}{\textit{Forcing variables (input only)}}\\
\midrule

$\mathrm{DSWRF}_{\mathrm{TOA}}$
    & Downward shortwave radiative flux at TOA
    & W\,m$^{-2}$ & Mean \\

$T_s$
    & Skin temperature over open ocean (SST)
    & K & Snapshot \\

$z_s$
    & Surface topographic height
    & m & Invariant \\

$f_l$
    & Land fraction
    & -- & Invariant \\

$f_o$
    & Ocean fraction
    & -- & Snapshot \\

$f_{\mathrm{si}}$
    & Sea-ice fraction
    & -- & Snapshot \\

$\mathrm{CO}_2$
    & Global-mean atmospheric CO$_2$
    & ppm & Snapshot \\

\midrule
\multicolumn{4}{@{}l}{\textit{Diagnostic variables (output only)}}\\
\midrule

$\mathrm{USWRF}_{\mathrm{TOA}}$
    & Upward shortwave radiative flux at TOA
    & W\,m$^{-2}$ & Mean \\

$\mathrm{ULWRF}_{\mathrm{TOA}}$
    & Upward longwave radiative flux at TOA
    & W\,m$^{-2}$ & Mean \\

$\mathrm{USWRF}_{\mathrm{sfc}}$
    & Upward shortwave radiative flux at surface
    & W\,m$^{-2}$ & Mean \\

$\mathrm{ULWRF}_{\mathrm{sfc}}$
    & Upward longwave radiative flux at surface
    & W\,m$^{-2}$ & Mean \\

$\mathrm{DSWRF}_{\mathrm{sfc}}$
    & Downward shortwave radiative flux at surface
    & W\,m$^{-2}$ & Mean \\

$\mathrm{DLWRF}_{\mathrm{sfc}}$
    & Downward longwave radiative flux at surface
    & W\,m$^{-2}$ & Mean \\

$P$
    & Surface precipitation rate, all phases
    & kg\,m$^{-2}$\,s$^{-1}$ & Mean \\

$(\partial \mathrm{TWP}/\partial t)_{\mathrm{adv}}$
    & Total-water-path tendency from advection
    & kg\,m$^{-2}$\,s$^{-1}$ & Mean \\

$\mathrm{LHF}$
    & Surface latent heat flux
    & W\,m$^{-2}$ & Mean \\

$\mathrm{SHF}$
    & Surface sensible heat flux
    & W\,m$^{-2}$ & Mean \\

$Z_{500}$
    & 500-hPa geopotential height
    & m & Snapshot \\

$T_{850}$
    & 850-hPa air temperature
    & K & Snapshot \\

\bottomrule
\end{tabular} \vspace{8mm}
\end{table*}

\clearpage

\section{Computational Details}
\label{app:compute}

\paragraph{\emph{ReRoute} fine-tuning.}
\emph{ReRoute} is initialized from the released pretrained ACE2 checkpoints and does not retrain the underlying climate emulator from scratch. Fine-tuning for both the ACE2S-SHiELD+$_{\mathrm{no\text{-}RC}}$ and ACE2-ERA5 experiments was performed on two NVIDIA A100 GPUs. Each run completed in
less than 24 hours of wall-clock time and was continued until the training and validation metrics saturated. Experiment-specific optimization hyperparameters are reported in~\cref{app:experiment_details}.

\paragraph{Seed sensitivity and checkpoint selection.}
We repeated fine-tuning with three random seeds and observed no material differences in convergence or downstream performance. We therefore selected a single high-performing checkpoint for the full long-horizon analysis and
used it unchanged across the corresponding factual and counterfactual evaluations. The checkpoint evaluation criterion follows the those for the baseline models, as reported in~\citep{watt2025ace2,clark2026disentangling}.

We report uncertainty across independent stochastic realizations and initial conditions, as specified in \cref{app:experiment_details}. For example, the abrupt-$4\times$CO$_2$ experiment reports variability across its 36-member initial-condition ensemble. We do not additionally report error bars across fine-tuning seeds because variation across the three adaptation runs was small relative to the predictive variability of the stochastic ACE2S emulator.

\paragraph{Compute comparison.}
The compute reported above refers only to \emph{ReRoute} adaptation and excludes the cost of training the released ACE2 backbones and generating the pre-existing ERA5 and SHiELD datasets and evaluation simulations.

Our reported $18\times$ compute reduction applies specifically to the controlled SHiELD benchmark, where a directly comparable
ACE2S-SHiELD+ model trained with additional random-CO$_2$ simulations is available. We compare the GPU-hours required to adapt the released no-RC checkpoint with \emph{ReRoute} against the GPU-hours required to train that controlled-data counterpart. We do not make this compute comparison for ACE2-ERA5, for which no analogous model trained on counterfactual
reanalysis trajectories exists.

\hfill 

\section{Climate-Specific Methodology}
\label{app:climate_details}
This section provides the climate-specific methodological details used in our ACE2 experiments. We first describe the construction of the RRTMG-based CO$_2$ pathway and then the tropical-cyclone tracking procedure used for evaluation.
\subsection{RRTMG CO$_2$ Pathway Construction}
\label{app:rrtmg_pathway}
\looseness=-1 \emph{ReRoute} requires only the first-level mechanistic response of the targeted variable. We compute the CO$_2$ radiative pathway offline using the RRTMG
radiative-transfer scheme \citep{iacono2008radiative}, through its implementation in the Climate Modelling and Diagnostics Toolkit (\texttt{CliMT}) \citep{monteiro2018sympl}.

We first compute a climatological-mean atmospheric state from the training data. Holding this atmospheric state fixed, we run RRTMG at a reference CO$_2$ concentration $\sb_0$ and at perturbed concentrations $c$. This isolates
the instantaneous radiative response to changing CO$_2$, without including the subsequent dynamical adjustment of the atmosphere or ocean.

For each atmospheric level $k$, we compute the change in radiative heating relative to the reference state and normalize by the logarithmic CO$_2$
perturbation:
\begin{equation}
    q_k^{\log}
    :=
    \frac{
        q_k(\sb)-q_k(\sb_0)
    }{
        \log(\sb/\sb_0)
    },
    \label{eq:qlog}
\end{equation}
where $q_k(\sb)$ denotes the RRTMG radiative heating response at CO$_2$ concentration $c$. We similarly compute logarithmic response coefficients for
the longwave radiative fluxes at the top of atmosphere and surface. These response coefficients are computed once and stored as fixed spatial fields.

For a requested CO$_2$ concentration $\sb_t$, the pathway is obtained by scaling
the stored coefficients by
\[
    \log\!\left(\sb_t/\sb_0 \right).
\]
For atmospheric temperature, the corresponding six-hour increment at level
$k$ is
\begin{equation}
    \Delta T_k
    =
    \frac{g\,\Delta t}{c_p\,\Delta p_k}
    q_k^{\log}
    \log\!\left(\frac{\sb_t}{\sb_0}\right),
    \label{eq:co2-temp-pathway}
\end{equation}
where $\Delta p_k$ is the pressure thickness of the layer, $c_p$ is the
specific heat capacity of air, and $\Delta t=6$\,h is the ACE2-SOM timestep.
The stored longwave-flux response coefficients are scaled in the same way for
$\mathrm{ULWRF}_{\mathrm{toa}}$,
$\mathrm{DLWRF}_{\mathrm{sfc}}$, and
$\mathrm{ULWRF}_{\mathrm{sfc}}$.

This defines the mechanistic pathway
\[
    \Gamma(\sb_0,\sb_t),
\]
which captures only the instantaneous first-level radiative effect of the CO$_2$ intervention. \textit{Notably, it does not prescribe the full downstream climate response}. Subsequent changes in temperature, moisture, circulation, surface fluxes, and the slab ocean emerge through the autoregressive ACE2-SOM dynamics. In all experiments, the stored pathway coefficients are kept fixed.

\subsection{Tropical Cyclone Tracking Details}
\label{app:tc-tracking}

\looseness=-1
As an additional evaluation of the weather statistics generated by ACE2-ERA5, we compare tropical-cyclone (TC) climatology between ERA5, the pretrained ACE2-ERA5 baseline, and ACE2-ERA5+\emph{ReRoute} over the held-out 2001--2010 period using six-hourly $1^\circ$ fields. TCs are detected and tracked with exactly the same \textbf{TempestExtremes v2.1} pipeline~\citep{ullrich2021tempestextremes} for all three datasets, so differences in the resulting tracks arise from the generated atmospheric fields rather than from the detection procedure. Candidate storm centers are first identified with \texttt{DetectNodes} as local minima in mean sea-level pressure satisfying a closed-contour pressure drop of at least $200$~Pa within a $5.5^\circ$ radius. Following the ACE2-ERA5 evaluation protocol~\citep{watt2025ace2}, the warm-core criterion is expressed using the ACE2 $T_3$ atmospheric layer, whose pressure range is approximately 250--400~hPa, because ACE2 does not directly predict the upper-level geopotential thickness conventionally used by the tracker. A candidate must exhibit a decrease of at least $0.4$~K in $T_3$ away from the storm center, approximately matching the hydrostatic thickness criterion used in the standard TempestExtremes configuration. Detected centers are then linked through time using \texttt{StitchNodes} to construct complete cyclone trajectories. We apply identical detection, stitching, duration, and displacement criteria to ERA5, Baseline, and Baseline+\emph{ReRoute}, and summarize the resulting spatial track climatology over the full 10-year period. This controlled use of a common tracker permits a direct comparison of the model-generated TC statistics against ERA5.

\section{Climate Experiment Details and Additional Results}
\label{app:experiment_details}

This section provides the full experimental configurations and additional results for our climate evaluations. We first describe the controlled coupled-climate experiments with ACE2S-SHiELD+, followed by the historical reanalysis experiments with ACE2-ERA5.

\subsection{Coupled Climate Simulation}
\label{app:ace2s_shield}

\paragraph{Experimental setup.}
We evaluate \emph{ReRoute} on the released ACE2S-SHiELD+$_{\mathrm{no-RC}}$ checkpoint~\citep{clark2026disentangling}, a six-hour autoregressive global climate emulator with the same stochastic SFNO~\citep{bonev2023spherical} architecture and prognostic variables as ACE2S-SHiELD+, but trained without the random-CO$_2$ simulations used to explicitly decorrelate SST and CO$_2$. The no-RC checkpoint is trained on 90 years of data: 45 years of AMIP simulations and 45 years of equilibrium-climate simulations spanning $1\times$, $2\times$, and $4\times$CO$_2$. Validation uses held-out AMIP and equilibrium-climate years, and checkpoint selection uses 120 years of inline inference across the three equilibrium climates (details about the dataset see \cref{app:data_shield}). Although the equilibrium reference simulations originate from ocean-coupled SHiELD, training, validation, and inline inference are performed with prescribed SST, following~\citet{clark2026disentangling}. Thus, the model observes multiple equilibrium CO$_2$ climates but receives no training trajectories in which CO$_2$ is varied independently of the corresponding climate state. We use the released no-RC checkpoint as the pretrained baseline throughout this experiment.

\hfill

\paragraph{Counterfactual query.}
We evaluate an abrupt-$4\times$CO$_2$ intervention using the SOM-coupled SHiELD benchmark of~\citet{clark2026disentangling}. Each simulation is initialized from the $1\times$CO$_2$ climate and the atmospheric CO$_2$ concentration is instantaneously increased to $4\times$CO$_2$, while the slab ocean subsequently evolves interactively through the surface fluxes predicted by the atmospheric model. This intervention places the model in a forcing--state combination absent from its training distribution: although equilibrium climates at both $1\times$ and $4\times$CO$_2$ are represented during training, the model never observes a $1\times$ climate state subjected instantaneously to $4\times$CO$_2$ forcing. We follow the reference protocol and evaluate a 36-member ensemble of 90-day rollouts, with initial conditions spaced one month apart over three years to average weather variability and sample the seasonal cycle. The corresponding SHiELD simulations provide an independent reference trajectory for every ensemble member, enabling direct quantitative evaluation of the intervention response.

\paragraph{\emph{ReRoute} configuration.}
For this experiment, the CO$_2$ pathway is the fixed RRTMG radiative-response pathway described in \cref{app:rrtmg_pathway}, with the reference concentration set to $\sb_0=363.43\,\mathrm{ppm}$ and pathway scaling fixed to one. The pretrained backbone is always conditioned on $\sb_0$, while the abrupt change in CO$_2$ is supplied through $\Gamma(\sb_0,\sb_t)$. The baseline instead receives the requested CO$_2$ concentration through its native conditioning. The coupled evaluation is performed without random-CO$_2$ training trajectories or abrupt-$4\times$CO$_2$ supervision.

\paragraph{Evaluation Metrics}
Following~\citet{clark2026disentangling}, our primary long-horizon metric is the channel-mean area-weighted RMSE of the normalized time- and ensemble-mean bias.

Let $\tilde y_{c,m,s}(\phi,\lambda)$ and $\tilde{\hat y}_{c,m,s}(\phi,\lambda)$ denote the normalized SHiELD reference and model prediction for channel $c$, ensemble member $m$, and six-hourly timestep $s$. Here, $\phi$ and $\lambda$ denote latitude and longitude, respectively. We first form the time-mean bias map for each member,
\[
\Delta_{c,m}(\phi,\lambda)
=
\frac{1}{T}
\sum_{s=1}^{T}
\left[
\tilde{\hat y}_{c,m,s}(\phi,\lambda)
-
\tilde y_{c,m,s}(\phi,\lambda)
\right],
\]
and then average the bias maps across the $M=36$ initial-condition members,
\[
\overline{\Delta}_{c}(\phi,\lambda)
=
\frac{1}{M}\sum_{m=1}^{M}
\Delta_{c,m}(\phi,\lambda).
\]
For each output channel, the spatial error is
\[
\mathrm{RMSE}_{c}
=
\left(
\sum_{\phi,\lambda}
w_{\phi,\lambda}
\overline{\Delta}_{c}(\phi,\lambda)^2
\right)^{1/2},
\]
where $w_{\phi,\lambda}$ are normalized spherical-area weights. We then average across the $C=51$ ACE2S-SHiELD+ output channels,
\[
\alpha_{\mathrm{ens}}
=
\frac{1}{C}
\sum_{c=1}^{C}
\mathrm{RMSE}_{c}.
\]
Averaging over time and ensemble members before the nonlinear spatial RMS matches the evaluation convention of~\citet{clark2026disentangling}. 

\paragraph{Member-wise uncertainty.}
To quantify variation across the 36 independent initial conditions, we additionally compute the same aggregate error separately for each ensemble member,
\[
\alpha_m
=
\frac{1}{C}\sum_{c=1}^{C}
\left(
\sum_{\phi,\lambda}
w_{\phi,\lambda}
\Delta_{c,m}(\phi,\lambda)^2
\right)^{1/2},
\]
and report the mean and population standard deviation of $\{\alpha_m\}_{m=1}^{M}$. This is the statistic shown with error bars in \cref{fig:shift-severity}. The native no-RC model obtains $0.1619\pm0.0116$, while \emph{ReRoute} obtains $0.1324\pm0.0101$, corresponding to an $18.2\%$ reduction in mean member-wise error. The ensemble-mean-first metric $\alpha_{\mathrm{ens}}$ and the member-wise statistic quantify complementary properties: the former measures error after weather noise has been averaged across the ensemble, while the latter exposes variability across independent initial conditions.

\hfill

\subsection{Historical Reanalysis with ACE2-ERA5}
\label{app:ace2_era5}

\paragraph{Experimental setup.}
We evaluate \emph{ReRoute} on ACE2-ERA5~\citep{watt2025ace2}, a six-hour autoregressive global atmospheric emulator trained directly on ERA5 reanalysis. ACE2-ERA5 operates at $1^\circ$ horizontal resolution with eight atmospheric layers and is conditioned on prescribed sea-surface temperature (SST), sea ice, incoming solar radiation, global-mean atmospheric CO$_2$, and static surface fields. The released model is trained on ERA5 data from 1940--1995, 2011--2019, and 2021--2022, with 1996--2000 used for validation and 2001--2010 and 2020 reserved for testing (details about the dataset are provided in \cref{app:data_era5}). Because SST, sea ice, and atmospheric CO$_2$ co-evolve throughout the historical record, ACE2-ERA5 never observes controlled variations that independently separate their effects. We use the released ACE2-ERA5 checkpoint as the pretrained baseline and initialize \emph{ReRoute} from exactly the same model. The architecture, normalization statistics, conservation corrections, prognostic variables, prescribed boundary conditions, and six-hour autoregressive timestep are unchanged; \emph{ReRoute} modifies only the pathway through which CO$_2$ enters the emulator.\\

\paragraph{Counterfactual query.}
We ask how the historical atmosphere would evolve if atmospheric CO$_2$ had remained fixed at its 1940 value while the remaining observed boundary conditions followed their historical trajectories. We therefore perform two otherwise identical 81-year rollouts from January 1940 through December 2020.

In the factual rollout, CO$_2$ follows its historical trajectory; in the intervention rollout, CO$_2$ is held fixed at its 1940 value, while SST, sea ice, incoming solar radiation, and all other prescribed forcings remain identical to the historical experiment. This intervention breaks the historical SST--CO$_2$ association present in the ERA5 training distribution. No observed atmosphere exists under the same realized 1940--2020 SST and sea-ice trajectory with CO$_2$ fixed, so this experiment is evaluated through the physical structure of the predicted response. 

\hfill  

\paragraph{\emph{ReRoute} adaptation.}
We initialize \emph{ReRoute} from the released ACE2-ERA5 checkpoint and adapt it using only factual historical ERA5 samples. 

Training uses two-step autoregressive rollouts with an MSE objective applied at both rollout steps. We optimize with AdamW using learning rate $2\times10^{-6}$ and zero weight decay, with a linear warmup from $0.1$ to $1$ times the base learning rate over the first 60 optimization steps. We use batch size four, no gradient accumulation or mixed precision, and exponential-moving-average model parameters for validation. The RRTMG pathway is constructed as described in \cref{app:rrtmg_pathway}, with reference concentration $\sb_0$ set to the mean CO$_2$ concentration of the ACE2-ERA5 training data.. 

\hfill  

\paragraph{Fixed-CO$_2$ response.}
We evaluate the intervention using the spatial response of near-surface air temperature. Let $T^{\mathrm{hist}}_{\mathrm{2m}}(\phi,\lambda,t)$ and $T^{\mathrm{fix}}_{\mathrm{2m}}(\phi,\lambda,t)$ denote the predicted 2-m temperature under historical and fixed CO$_2$, respectively. We compute the time-mean response
\begin{equation}
    \Delta T_{\mathrm{2m}}(\phi,\lambda)
    =
    \overline{T}^{\,\mathrm{fix}}_{\mathrm{2m}}(\phi,\lambda)
    -
    \overline{T}^{\,\mathrm{hist}}_{\mathrm{2m}}(\phi,\lambda),
    \label{eq:era5-fixed-response}
\end{equation}
where $\phi$ and $\lambda$ denote latitude and longitude, respectively, and the overbar denotes the mean over the 1940--2020 rollout. Because SST and sea ice are identical between the two experiments, $\Delta T_{\mathrm{2m}}$ isolates the change produced by altering the emulator's CO$_2$ pathway while retaining the same prescribed surface evolution. \\

The native ACE2-ERA5 baseline produces widespread negative $\Delta T_{\mathrm{2m}}$, indicating that fixing CO$_2$ removes a substantial portion of the warming supported by the evolving boundary conditions. \emph{ReRoute} produces a much weaker fixed-minus-historical response over most of the globe, preserving substantially more of the surface warming encoded by the prescribed SST and sea-ice evolution.

\clearpage

\begin{figure}[thbp] \vspace{4mm}
    \centering
    \includegraphics[width=\linewidth]{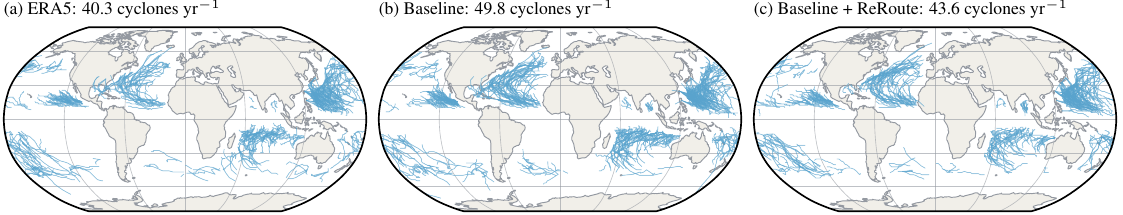}
    \caption{
    \textbf{\emph{ReRoute} improves climatology of tropical cyclones in ACE2-ERA5.}
    Tropical cyclone tracks detected in ERA5, the pretrained ACE2-ERA5 baseline, and ACE2-ERA5 + \emph{ReRoute} over the held-out evaluation period. The same detection and tracking procedure of tropical cyclones are applied to all three datasets. \emph{ReRoute} produces a spatial distribution of cyclone activity that more closely matches the ERA5 reference than the pretrained baseline.
    }
    \label{fig:tc-tracks} \vspace{3mm}
\end{figure}

\paragraph{Tropical-cyclone evaluation.}
We additionally evaluate whether the factual climate generated by \emph{ReRoute} reproduces the tropical-cyclone statistics of ERA5. Following the ACE2 evaluation protocol of~\citet{watt2025ace2}, we analyze tropical-cyclone-like features over the held-out 2001--2010 period using the TempestExtremes tracking package. We apply the same detection and tracking parameters to ERA5, the pretrained ACE2-ERA5 baseline, and ACE2-ERA5+\emph{ReRoute}. The tracker follows the default tropical-cyclone configuration used for ACE2, with the warm-core criterion adapted to the variables predicted by the emulator: instead of upper-level geopotential thickness, we use the $T_3$ atmospheric layer, corresponding approximately to 250--400\,hPa, and require temperature to decrease by $0.4$\,K away from the cyclone center. We compare the resulting track climatology and global occurrence frequency across the three datasets. Because the identical tracker is applied to ERA5 and both emulator rollouts, differences directly reflect the generated atmospheric statistics rather than differences in event identification. \\

As shown in \cref{fig:tc-tracks}, the proposed \emph{ReRoute} approach produces a distribution of tropical-cyclone tracks that more closely matches the ERA5 reference than the pretrained ACE2-ERA5 baseline. Details are summarized in \cref{app:tc-tracking}.

\end{document}